\theoremstyle{plain}
\newtheorem{theorem}{Theorem}[section]
\newtheorem{lemma}[theorem]{Lemma}
\newtheorem{corollary}[theorem]{Corollary}
\theoremstyle{definition}
\newtheorem{definition}[theorem]{Definition}
\theoremstyle{remark}
\newtheorem{remark}[theorem]{Remark}
\DeclareMathOperator*{\argmax}{arg\,max}
\DeclareMathOperator*{\stochargmax}{stoch\,arg\,max}
\DeclareMathOperator*{\stochmax}{stoch\,max}
\DeclareMathOperator{\aset}{\mathcal{A}}
\DeclareMathOperator{\sset}{\mathcal{S}}
\DeclareMathOperator{\reals}{\mathbb{R}}
\DeclareMathOperator{\action}{\mathbf{a}}
\DeclareMathOperator{\state}{\mathbf{s}}
\newcommand{\expectx}[2]{\mathbb{E}_{#1} \left[ #2 \right]}
\icmltitlerunning{Stochastic Q-learning for Large Discrete Action Spaces}
\begin{document}

\twocolumn[
\icmltitle{Stochastic Q-learning for Large Discrete Action Spaces}




\begin{icmlauthorlist}
\icmlauthor{Fares Fourati}{yyy}
\icmlauthor{Vaneet Aggarwal}{comp}
\icmlauthor{Mohamed-Slim Alouini}{yyy}
\end{icmlauthorlist}

\icmlaffiliation{yyy}{Department of Computer, Electrical and Mathematical Science and Engineering, King Abdullah University of Science and Technology (KAUST), Thuwal, KSA.}
\icmlaffiliation{comp}{School of Industrial Engineering, Purdue University, West
Lafayette, IN 47907, USA}

\icmlcorrespondingauthor{Fares Fourati}{fares.fourati@kaust.edu.sa}

\icmlkeywords{Machine Learning, ICML}

\vskip 0.3in
]



\printAffiliationsAndNotice{}  

\begin{abstract}
In complex environments with large discrete action spaces, effective decision-making is critical in reinforcement learning (RL). Despite the widespread use of value-based RL approaches like Q-learning, they come with a computational burden, necessitating the maximization of a value function over all actions in each iteration. This burden becomes particularly challenging when addressing large-scale problems and using deep neural networks as function approximators.
In this paper, we present stochastic value-based RL approaches which, in each iteration, as opposed to optimizing over the entire set of $n$ actions, only consider a variable stochastic set of a sublinear number of actions, possibly as small as $\mathcal{O}(\log(n))$. The presented stochastic value-based RL methods include, among others, Stochastic Q-learning, StochDQN, and StochDDQN, all of which integrate this stochastic approach for both value-function updates and action selection. The theoretical convergence of Stochastic Q-learning is established, while an analysis of stochastic maximization is provided. Moreover, through empirical validation, we illustrate that the various proposed approaches outperform the baseline methods across diverse environments, including different control problems, achieving near-optimal average returns in significantly reduced time.
\end{abstract}

\section{Introduction}

Reinforcement learning (RL), a continually evolving field of machine learning, has achieved notable successes, especially when combined with deep learning \cite{sutton2018reinforcement, wang2022deep}. While there have been several advances in the field, a significant challenge lies in navigating complex environments with large discrete action spaces \citep{dulac2015deep, dulac2021challenges}. In such scenarios, standard RL algorithms suffer in terms of computational efficiency \cite{akkerman2023handling}. Identifying the optimal actions might entail cycling through all of them, in general, multiple times within different states, which is computationally expensive and may become prohibitive with large discrete action spaces \citep{tessler2019action}. 

Such challenges apply to various domains, including combinatorial optimization \citep{mazyavkina2021reinforcement, fourati2023randomized, fourati2024combinatorial, fourati2024federated}, natural language processing \citep{he2015deep, he-etal-2016-deep, he-etal-2016-deep-reinforcement, tessler2019action}, communications and networking \cite{luong2019applications, fourati2021artificial}, recommendation systems \citep{dulac2015deep},  transportation \cite{al2019deeppool,haliem2021distributed,li2022combining}, and robotics \citep{dulac2015deep, tavakoli2018action, tang2020discretizing, seyde2021bang, seyde2022solving, gonzalez2023asap, ireland2024revalued}. 
Although tailored solutions leveraging action space structures and dimensions may suffice in specific contexts, their applicability across diverse problems, possibly unstructured, still needs to be expanded. We complement these works by proposing a general method that addresses a broad spectrum of problems, accommodating structured and unstructured single and multi-dimensional large discrete action spaces.

Value-based and actor-based approaches are both prominent approaches in RL. Value-based approaches, which entail the agent implicitly optimizing its policy by maximizing a value function, demonstrate superior generalization capabilities but demand significant computational resources, particularly in complex settings. Conversely, actor-based approaches, which entail the agent directly optimizing its policy, offer computational efficiency but often encounter challenges in generalizing across multiple and unexplored actions \citep{dulac2015deep}. While both hold unique advantages and challenges, they represent distinct avenues for addressing the complexities of decision-making in large action spaces. However, comparing them falls outside the scope of this work. While some previous methods have focused on the latter \citep{dulac2015deep}, our work concentrates on the former. Specifically, we aim to exploit the natural generalization inherent in value-based RL approaches while reducing their per-step computational complexity.

Q-learning, as introduced by \citet{watkins1992q}, for discrete action and state spaces, stands out as one of the most famous examples of value-based RL methods and remains one of the most widely used ones in the field. As an off-policy learning method, it decouples the learning process from the agent's current policy, allowing it to leverage past experiences from various sources, which becomes advantageous in complex environments. In each step of Q-learning, the agent updates its action value estimates based on the observed reward and the estimated value of the best action in the next state. 

Some approaches have been proposed to apply Q-learning to continuous state spaces, leveraging deep neural networks \citep{mnih2013playing, van2016deep}. Moreover, several improvements have also been suggested to address its inherent estimation bias \citep{hasselt2010double, van2016deep, zhang2017weighted, lan2020maxmin, wang2021adaptive}. However, despite the different progress and its numerous advantages, a significant challenge still needs to be solved in Q-learning-like methods when confronted with large discrete action spaces. The computational complexity associated with selecting actions and updating Q-functions increases proportionally with the increasing number of actions, which renders the conventional approach impractical as the number of actions substantially increases. Consequently, we confront a crucial question: \textit{Is it possible to mitigate the complexity of the different Q-learning methods while maintaining a good performance?} 

This work proposes a novel, simple, and practical approach for handling general, possibly unstructured, single-dimensional or multi-dimensional, large discrete action spaces. Our approach targets the computational bottleneck in value-based methods caused by the search for a maximum ($\max$ and $\argmax$) in every learning iteration, which scales as $\mathcal{O}(n)$, i.e., linearly with the number of possible actions $n$. Through randomization, we can reduce this linear per-step computational complexity to logarithmic. 

We introduce $\stochmax$ and $\stochargmax$, which, instead of exhaustively searching for the precise maximum across the entire set of actions, rely on at most two random subsets of actions, both of sub-linear sizes, possibly each of size $\lceil\log(n)\rceil$. The first subset is randomly sampled from the complete set of actions, and the second from the previously exploited actions. These stochastic maximization techniques amortize the computational overhead of standard maximization operations in various Q-learning methods \citep{watkins1992q, hasselt2010double, mnih2013playing, van2016deep}. Stochastic maximization methods significantly accelerate the agent's steps, including action selection and value-function updates in value-based RL methods, making them practical for handling challenging, large-scale, real-world problems.

We propose Stochastic Q-learning, Stochastic Double Q-learning, StochDQN, and StochDDQN, which are obtained by changing $\max$ and $\argmax$ to $\stochmax$ and $\stochargmax$ in the Q-learning \citep{watkins1992q}, the Double Q-learning \cite{hasselt2010double}, the deep Q-network (DQN) \citep{mnih2013playing} and the double DQN (DDQN) \citep{van2016deep}, respectively. Furthermore, we observed that our approach works even for the on-policy Sarsa \cite{rummery1994line}.

We conduct a theoretical analysis of the proposed method, proving the convergence of Stochastic Q-learning, which integrates these techniques for action selection and value updates, and establishing a lower bound on the probability of sampling an optimal action from a random set of size $\lceil\log(n)\rceil$ and analyze the error of stochastic maximization compared to exact maximization. Furthermore, we evaluate the proposed RL algorithms on environments from Gymnasium \citep{brockman2016openai}. For the stochastic deep RL algorithms, the evaluations were performed on control tasks within the multi-joint dynamics with contact (MuJoCo) environment \citep{todorov2012mujoco} with discretized actions \citep{dulac2015deep, tavakoli2018action, tang2020discretizing}. These evaluations demonstrate that the stochastic approaches outperform non-stochastic ones regarding wall time speedup and sometimes rewards. Our key contributions are summarized as follows:
\begin{itemize}[leftmargin=*]
\item We introduce novel stochastic maximization techniques denoted as $\stochmax$ and $\stochargmax$, offering a compelling alternative to conventional deterministic maximization operations, particularly beneficial for handling large discrete action spaces, ensuring sub-linear complexity concerning the number of actions. 

\item We present a suite of value-based algorithms suitable for large discrete actions, including Stochastic Q-learning, Stochastic Sarsa, Stochastic Double Q-learning, StochDQN, and StochDDQN, which integrate stochastic maximization within Q-learning, Sarsa, Double Q-learning, DQN, and DDQN, respectively.

\item We analyze stochastic maximization and demonstrate the convergence of Stochastic Q-learning. Furthermore, we empirically validate our approach to tasks from the Gymnasium and MuJoCO environments, encompassing various dimensional discretized actions.
\end{itemize}

\section{Related Works}

While RL has shown promise in diverse domains, practical applications often grapple with real-world complexities. A significant hurdle arises when dealing with large discrete action spaces \citep{dulac2015deep, dulac2021challenges}. Previous research has investigated strategies to address this challenge by leveraging the combinatorial or the dimensional structures in the action space \cite{he-etal-2016-deep-reinforcement, tavakoli2018action, tessler2019action, delarue2020reinforcement, seyde2021bang, seyde2022solving, fourati2023randomized, fourati2024combinatorial, fourati2024federated, akkerman2023handling, fourati2024combinatorial, fourati2024federated, ireland2024revalued}. For example, \citet{he-etal-2016-deep-reinforcement} leveraged the combinatorial structure of their language problem through sub-action embeddings. Compressed sensing was employed in \citep{tessler2019action} for text-based games with combinatorial actions. \citet{delarue2020reinforcement} formulated the combinatorial action decision of a vehicle routing problem as a mixed-integer program.
Moreover, \citet{akkerman2023handling} introduced dynamic neighbourhood construction specifically for structured combinatorial large discrete action spaces. Previous works tailored solutions for multi-dimensional spaces such as those in \cite{seyde2021bang, seyde2022solving, ireland2024revalued}, among others, while practical in the multi-dimensional spaces, may not be helpful for single-dimensional large action spaces. While relying on the structure of the action space is practical in some settings, not all problems with large action spaces are multi-dimensional or structured. We complement these works by making no assumptions about the structure of the action space. 

Some approaches have proposed factorizing the action spaces to reduce their size. For example, these include factorizing into binary subspaces \citep{lagoudakis2003reinforcement, sallans2004reinforcement, pazis2011generalized, dulac2012fast}, expert demonstration \citep{tennenholtz2019natural}, tensor factorization \citep{mahajan2021reinforcement}, and symbolic representations \citep{cui2016online}. Additionally, some hierarchical and multi-agent RL approaches employed factorization as well \citep{zhang2020generating, kim2021learning, peng2021facmac, enders2023hybrid}. While some of these methods effectively handle large action spaces for certain problems, they necessitate the design of a representation for each discrete action. Even then, for some problems, the resulting space may still be large.

Methods presented in \citep{van2009using, dulac2015deep, wang2020bic} combine continuous-action policy gradients with nearest neighbour search to generate continuous actions and identify the nearest discrete actions. These are interesting methods but require continuous-to-discrete mapping and are mainly policy-based rather than value-based approaches.
In the works of \citet{kalashnikov2018qt} and \citet{quillen2018deep}, the cross-entropy method \citep{rubinstein1999cross} was utilized to approximate action maximization. This approach requires multiple iterations ($r$) for a single action selection. During each iteration, it samples $n'$ values, where $n' < n$, fits a Gaussian distribution to $m < n'$ of these samples, and subsequently draws a new batch of $n'$ samples from this Gaussian distribution. As a result, this approximation remains costly, with a complexity of $\mathcal{O}(rn')$. Additionally, in the work of \citet{van2020q}, a neural network was trained to predict the optimal action in combination with a uniform search. This approach involves the use of an expensive autoregressive proposal distribution to generate $n'$ actions and samples a large number of actions ($m$), thus remaining computationally expensive, with $\mathcal{O}(n'+m)$. In \citep{metz2017discrete}, sequential DQN allows the agent to choose sub-actions one by one, which increases the number of steps needed to solve a problem and requires $d$ steps with a linear complexity of $\mathcal{O}(i)$ for a discretization granularity $i$. Additionally, \citet{tavakoli2018action} employs a branching technique with duelling DQN for combinatorial control problems. Their approach has a complexity of $\mathcal{O}(di)$ for actions with discretization granularity $i$ and $d$ dimensions, whereas our method, in a similar setting, achieves $\mathcal{O}(d\log(i))$. Another line of work introduces action elimination techniques, such as the action elimination DQN \citep{zahavy2018learn}, which employs an action elimination network guided by an external elimination signal from the environment. However, it requires this domain-specific signal and can be computationally expensive ($\mathcal{O}(n')$ where $n' \leq n$ are the number of remaining actions). In contrast, curriculum learning, as proposed by \citet{farquhar2020growing}, initially limits an agent's action space, gradually expanding it during training for efficient exploration. However, its effectiveness relies on having an informative restricted action space, and as the action space size grows, its complexity scales linearly with its size, eventually reaching $\mathcal{O}(n)$.

In the context of combinatorial bandits with a single state but large discrete action spaces, previous works have exploited the combinatorial structure of actions, where each action is a subset of main arms. For instance, for submodular reward functions, which imply diminishing returns when adding arms, in \cite{fourati2023randomized} and \cite{fourati2024combinatorial}, stochastic greedy algorithms are used to avoid exact search. The former evaluates the marginal gains of adding and removing sub-actions (arms), while the latter assumes monotonic rewards and considers adding the best arm until a cardinality constraint is met. For general reward functions, \citet{fourati2024federated} propose using approximation algorithms to evaluate and add sub-actions. While these methods are practical for bandits, they exploit the combinatorial structure of their problems and consider a single-state scenario, which is different from general RL problems.

While some approaches above are practical for handling specific problems with large discrete action spaces, they often exploit the dimensional or combinatorial structures inherent in their considered problems. In contrast, we complement these approaches by proposing a solution to tackle any general, potentially unstructured, single-dimensional or multi-dimensional, large discrete action space without relying on structure assumptions. Our proposed solution is general, simple, and efficient.

\section{Problem Description}
\label{sec:probdesc}

In the context of a Markov decision process (MDP), we have specific components: a finite set of actions denoted as $\aset$, a finite set of states denoted as $\sset$, a transition probability distribution $\mathcal{P}: \sset \times \aset \times \sset \rightarrow [0,1]$, a bounded reward function $r: \sset \times \aset \rightarrow \reals$, and a discount factor $\gamma \in [0,1]$.
Furthermore, for time step $t$, we denote the chosen action as $\action_t$, the current state as $\state_t$, and the received reward as $r_t \triangleq r(\state_t, \action_t)$. Additionally, for time step $t$, we define a learning rate function $\alpha_t: \sset \times \aset \rightarrow [0,1]$.

The cumulative reward an agent receives during an episode in an MDP with variable length time $T$ is the return $R_t$. It is calculated as the discounted sum of rewards from time step $t$ until the episode terminates: $R_t \triangleq \sum_{i=t}^T \gamma^{i-t} r_i$. RL aims to learn a policy $\pi: \sset \rightarrow \aset$ mapping states to actions that maximize the expected return across all episodes.
The state-action value function, denoted as $Q^{\pi}(\state, \action)$, represents the expected return when starting from a given state $\state$, taking action $\action$, and following a policy $\pi$ afterwards. The function $Q^\pi$ can be expressed recursively using the Bellman equation:
\begin{equation}
Q^\pi(\state, \action) = r(\state ,\action) + \gamma \sum_{\state'\in \sset} \mathcal{P}(\state'|\state,\action) Q^\pi(\state',\pi(\state')).
\end{equation}

Two main categories of policies are commonly employed in RL systems: value-based and actor-based policies \citep{sutton2018reinforcement}. This study primarily concentrates on the former type, where the value function directly influences the policy's decisions. An example of a value-based policy in a state $\state$ involves an $\varepsilon_{\state}$-greedy algorithm, selecting the action with the highest Q-function value with probability $(1-\varepsilon_{\state})$, where $\varepsilon_{\state} \geq 0$, function of the state $\state$, requiring the use of $\argmax$ operation, as follows:
\begin{equation}
\label{eq:argmaxPolicy}
\pi_Q(\state) =
\begin{cases}
\text{play randomly} & \text{with proba. } \epsilon_{\state} \\
\argmax_{\action \in \aset} Q(\state,\action) & \text{otherwise.} 
\end{cases}
\end{equation}
Furthermore, during the training, to update the Q-function, Q-learning \citep{watkins1992q}, for example, uses the following update rule, which requires a $\max$ operation: 
\begin{align}
\label{eq:update}
&Q_{t+1}\left(\state_t, \action_t\right)=\left(1-\alpha_t\left(\state_t, \action_t\right)\right) Q_t\left(\state_t, \action_t\right) \nonumber \\
&+\alpha_t\left(\state_t, \action_t\right)\left[r_t+\gamma \max _{b \in \mathcal{A}} Q_t\left(\state_{t+1}, b\right)\right] .  
\end{align}
Therefore, the computational complexity of both the action selections in Eq. (\ref{eq:argmaxPolicy}) and the Q-function updates in Eq. (\ref{eq:update}) scales linearly with the cardinality $n$ of the action set $\aset$, making this approach infeasible as the number of actions increases significantly. The same complexity issues remain for other Q-learning variants, such as Double Q-learning \citep{hasselt2010double}, DQN \citep{mnih2013playing}, and DDQN \citep{van2016deep}, among several others.

When representing the value function as a parameterized function, such as a neural network, taking only the current state $\state$ as input and outputting the values for all actions, as proposed in DQN \cite{mnih2013playing}, the network must accommodate a large number of output nodes, which results in increasing memory overhead and necessitates extensive predictions and maximization over these final outputs in the last layer. A notable point about this approach is that it does not exploit contextual information (representation) of actions, if available, which leads to lower generalization capability across actions with similar features and fails to generalize over new actions.

Previous works have considered generalization over actions by taking the features of an action $\action$ and the current state $\state$ as inputs to the Q-network and predicting its value \cite{zahavy2018learn, metz2017discrete, van2020q}. However, it leads to further complications when the value function is modeled as a parameterized function with both state $\state$ and action $\action$ as inputs. Although this approach allows for improved generalization across the action space by leveraging contextual information from each action and generalizing across similar ones, it requires evaluating the function for each action within the action set $\aset$. This results in a linear increase in the number of function calls as the number of actions grows. This scalability issue becomes particularly problematic when dealing with computationally expensive function approximators, such as deep neural networks \citep{dulac2015deep}. Addressing these challenges forms the motivation behind this work.

\section{Proposed Approach}

To alleviate the computational burden associated with maximizing a Q-function at each time step, especially when dealing with large action spaces, we introduce stochastic maximization methods with sub-linear complexity relative to the size of the action set $\aset$. Then, we integrate these methods into different value-based RL algorithms.

\subsection{Stochastic Maximization}
\label{stochmax_section}

We introduce stochastic maximization as an alternative to maximization when dealing with large discrete action spaces. Instead of conducting an exhaustive search for the precise maximum across the entire set of actions $\mathcal{A}$, stochastic maximization searches for a maximum within a stochastic subset of actions of sub-linear size relative to the total number of actions. In principle, any size can be used, trading off time complexity and approximation. We mainly focus on $\mathcal{O}(\log(n))$ to illustrate the power of the method in recovering Q-learning, even with such a small number of actions, with logarithmic complexity.

We consider two approaches to stochastic maximization: memoryless and memory-based approaches. The memoryless one samples a random subset of actions $\mathcal{R} \subseteq \aset$ with a sublinear size and seeks the maximum within this subset. On the other hand, the memory-based one expands the randomly sampled set to include a few actions $\mathcal{M}$ with a sublinear size from the latest exploited actions $\mathcal{E}$ and uses the combined sets to search for a stochastic maximum. Stochastic maximization, which may miss the exact maximum in both versions, is always upper-bounded by deterministic maximization, which finds the exact maximum. However, by construction, it has sublinear complexity in the number of actions, making it appealing when maximizing over large action spaces becomes impractical.

Formally, given a state $\state$, which may be discrete or continuous, along with a Q-function, a random subset of actions $\mathcal{R}\subseteq \aset$, and a memory subset $\mathcal{M}\subseteq \mathcal{E}$ (empty in the memoryless case), each subset being of sublinear size, such as at most $\mathcal{O}(\log(n))$ each, the $\stochmax$ is the maximum value computed from the union set $\mathcal{C} = \mathcal{R} \cup \mathcal{M}$, defined as:
\begin{equation}
\label{stoch_max_def}
\stochmax_{k\in \aset} Q_t(\state,k) \triangleq \max_{k\in \mathcal{C}} Q_t(\state,k).
\end{equation}
Besides, the $\stochargmax$ is computed as follows:
\begin{equation}
\label{stoch_arg_max_def}
\stochargmax_{k\in \aset} Q_t(\state,k) \triangleq \argmax_{k\in \mathcal{C}} Q_t(\state,k).
\end{equation}

In the analysis of stochastic maximization, we explore both memory-based and memoryless maximization. In the analysis and experiments, we consider the random set $\mathcal{R}$ to consist of $\lceil\log(n)\rceil$ actions. When memory-based, in our experiments, within a given discrete state, we consider the two most recently exploited actions in that state. For continuous states, where it is impossible to retain the latest exploited actions for each state, we consider a randomly sampled subset $\mathcal{M} \subseteq \mathcal{E}$, which includes $\lceil\log(n)\rceil$ actions, even though they were played in different states. We demonstrate that this approach was sufficient to achieve good results in the benchmarks considered; see Section \ref{Stochastic_DQN_Reward_Analysis}. Our Stochastic Q-learning convergence analysis considers memoryless stochastic maximization with a random set $\mathcal{R}$ of any size.

\begin{remark}
By setting $\mathcal{C}$ equal to $\mathcal{A}$, we essentially revert to standard approaches. Consequently, our method is an extension of non-stochastic maximization. However, in pursuit of our objective to make RL practical for large discrete action spaces, for a given state $\state$, in our analysis and experiments, we keep the union set $\mathcal{C}$ limited to at most $2\lceil\log(n)\rceil$, ensuring sub-linear (logarithmic) complexity.
\end{remark}

\subsection{Stochastic Q-learning}

\begin{algorithm}[t]
\begin{algorithmic}
\small
\caption{Stochastic Q-learning}
\label{alg:stoc_q_learning}
\STATE Initialize $Q(\state, \action)$ for all $\state \in \mathcal{S}, \action \in \mathcal{A}$
\FOR{each episode}
    \STATE Observe state $\state$.
    \FOR{each step of episode}
        \STATE Choose $\action$ from $\state$ with policy $\pi_Q^S(\state)$.
        \STATE Take action $\action$, observe $r$, $\state'$.
        \STATE $b^* \leftarrow \stochargmax_{b\in \mathcal{A}} Q(\state',b).$
        \STATE $\Delta \leftarrow r + \gamma  Q(\state', b^*) - Q(\state, \action)$.
        \STATE $Q(\state, \action) \leftarrow Q(\state, \action) + \alpha(\state,\action) \Delta$.\\
        \STATE $\state \leftarrow \state'$.
    \ENDFOR
\ENDFOR
\end{algorithmic}
\end{algorithm}

We introduce Stochastic Q-learning, described in Algorithm \ref{alg:stoc_q_learning}, and Stochastic Double Q-learning, described in Algorithm \ref{alg:stoc_double_q_learning} in Appendix \ref{appendix_pseudo_codes}, that replace the $\max$ and $\argmax$ operations in Q-learning and Double Q-learning with $\stochmax$ and $\stochargmax$, respectively. Furthermore, we introduce Stochastic Sarsa, described in Algorithm \ref{alg:sarsa} in Appendix \ref{appendix_pseudo_codes}, which replaces the maximization in the greedy action selection ($\argmax$) in Sarsa. 

Our proposed solution takes a distinct approach from the conventional method of selecting the action with the highest Q-function value from the complete set of actions $\aset$. Instead, it uses stochastic maximization, which finds a maximum within a stochastic subset $\mathcal{C}\subseteq\mathcal{A}$, constructed as explained in Section \ref{stochmax_section}. Our stochastic policy $\pi_Q^S(\state)$, uses an $\varepsilon_{\state}$-greedy algorithm, in a given state $\state$, with a probability of $(1-\varepsilon_{\state})$, for $\varepsilon_{\state} > 0$, is defined as follows:
\begin{equation}
\label{eq:stochargmaxPolicy}
\pi_Q^S(\state) \triangleq
\begin{cases}
\text{play randomly} & \text{with proba. } \epsilon_{\state} \\
\stochargmax_{\action \in \aset} Q(\state,\action) & \text{otherwise}. 
\end{cases}
\end{equation}
Furthermore, during the training, to update the Q-function, our proposed Stochastic Q-learning uses the following rule: 
\begin{align}
\label{eq:stoch_q_updates}
&Q_{t+1}\left(\state_t, \action_t\right)=\left(1-\alpha_t\left(\state_t, \action_t\right)\right) Q_t\left(\state_t, \action_t\right) \nonumber \\
&+\alpha_t\left(\state_t, \action_t\right)\left[r_t+ \gamma\stochmax _{b \in \mathcal{A}} Q_t\left(\state_{t+1}, b\right)\right] .    
\end{align}
While Stochastic Q-learning, like Q-learning, employs the same values for action selection and action evaluation, Stochastic Double Q-learning, similar to Double Q-learning, learns two separate Q-functions. For each update, one Q-function determines the policy, while the other determines the value of that policy. Both stochastic learning methods remove the maximization bottleneck from exploration and training updates, making these proposed algorithms significantly faster than their deterministic counterparts.

\subsection{Stochastic Deep Q-network}

We introduce Stochastic DQN (StochDQN), described in Algorithm \ref{alg:StochDQN} in Appendix \ref{appendix_pseudo_codes}, and Stochastic DDQN (StochDDQN) as efficient variants of deep Q-networks. These variants substitute the maximization steps in the DQN \citep{mnih2013playing} and DDQN \citep{van2016deep} algorithms with the stochastic maximization operations. In these modified approaches, we replace the $\varepsilon_{\state}$-greedy exploration strategy with the same exploration policy as in Eq. \eqref{eq:stochargmaxPolicy}.

For StochDQN, we employ a deep neural network as a function approximator to estimate the action-value function, represented as $Q(\state,\action;\theta) \approx Q(\state,\action)$, where $\theta$ denotes the weights of the Q-network. This network is trained by minimizing a series of loss functions denoted as $L_i(\theta_i)$, with these loss functions changing at each iteration $i$ as follows:
\begin{align}
\label{eq:q-learning-loss}
L_i\left(\theta_i\right) &\triangleq \expectx{\state,\action \sim \rho(\cdot)}{\left(y_i - Q \left(\state,\action ; \theta_i \right) \right)^2},
\end{align}
where $y_i \triangleq \expectx{}{r + \gamma \stochmax_{b \in \aset} Q(\state', b; \theta_{i-1}) | \state, \action }$.
In this context, $y_i$ represents the target value for an iteration $i$, and $\rho(.)$ is a probability distribution that covers states and actions. Like the DQN approach, we keep the parameters fixed from the previous iteration, denoted as $\theta_{i-1}$ when optimizing the loss function $L_i(\theta_i)$. 

These target values depend on the network weights, which differ from the fixed targets typically used in supervised learning. We employ stochastic gradient descent for the training. While StochDQN, like DQN, employs the same values for action selection and evaluation, StochDDQN, like DDQN, trains two separate value functions. It does this by randomly assigning each experience to update one of the two value functions, resulting in two sets of weights, $\theta$ and $\theta'$. For each update, one set of weights determines the policy, while the other set determines the values.

\section{Stochastic Maximization Analysis}

In the following, we study stochastic maximization with and without memory compared to exact maximization.

\subsection{Memoryless Stochastic Maximization}

Memoryless stochastic maximization, i.e., $\mathcal{C}= \mathcal{R} \cup\emptyset$, does not always yield an optimal maximizer. To return an optimal action, this action needs to be randomly sampled from the set of actions. Finding an exact maximizer, without relying on memory $\mathcal{M}$, is a random event with a probability $p$, representing the likelihood of sampling such an exact maximizer. In the following lemma, we provide a lower bound on the probability of discovering an optimal action within a uniformly randomly sampled subset $\mathcal{C}= \mathcal{R}$ of $\lceil\log(n)\rceil$ actions, which we prove in Appendix \ref{prooflemma}.

\begin{lemma}
\label{proba_lemma}
For any given state $\state$, the probability $p$ of sampling an optimal action from a uniformly randomly chosen subset $\mathcal{C}$ of size $\lceil\log(n)\rceil$ actions is at least $\frac{\lceil\log(n)\rceil}{n}$. 
\end{lemma}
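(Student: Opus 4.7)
The plan is to argue by symmetry on a single fixed optimal action, which gives the desired lower bound immediately, without needing to handle the general case of multiple optima via inclusion–exclusion.

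First, I would observe that the statement is really about a lower bound, so it suffices to consider the worst-case situation where the argmax is unique; any additional optimal actions only increase the probability of hitting at least one of them. So fix an arbitrary optimal action $a^\star \in \argmax_{\action \in \aset} Q(\state,\action)$ (which exists since $\aset$ is finite). Then the event ``$\mathcal{C}$ contains an optimal action'' contains the event ``$a^\star \in \mathcal{C}$,'' so
\begin{equation*}
p \;\geq\; \Pr[a^\star \in \mathcal{C}].
\end{equation*}

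Next I would compute $\Pr[a^\star \in \mathcal{C}]$ directly. Since $\mathcal{C}$ is a uniformly random subset of $\aset$ of size $k = \lceil\log(n)\rceil$, by symmetry every single element of $\aset$ lies in $\mathcal{C}$ with the same probability, and the expected size of $\mathcal{C}$ is $k$, so this common probability equals $k/n$. Equivalently, a direct count gives
\begin{equation*}
\Pr[a^\star \in \mathcal{C}] \;=\; \frac{\binom{n-1}{k-1}}{\binom{n}{k}} \;=\; \frac{k}{n} \;=\; \frac{\lceil\log(n)\rceil}{n}.
\end{equation*}
Combining with the previous inequality yields the claim. As a sanity check, one can also compute the complementary probability $\Pr[a^\star \notin \mathcal{C}] = \binom{n-1}{k}/\binom{n}{k} = (n-k)/n$, whose complement $k/n$ agrees.

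There is no real obstacle here; the only thing to be careful about is distinguishing ``probability that $\mathcal{C}$ contains some optimal action'' from ``probability that a specific optimal action is in $\mathcal{C}$,'' and noting that the former upper-bounds the latter, so the symmetry-based count on a single $a^\star$ is exactly what is needed for the stated lower bound. If $\lceil\log(n)\rceil \geq n$ (i.e., very small $n$), then $\mathcal{C} = \aset$ and $p = 1 \geq \lceil\log(n)\rceil/n$ trivially, so the bound holds in all regimes.
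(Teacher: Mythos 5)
Your proposal is correct and follows essentially the same route as the paper's proof: fix a single optimal action $\action^*_0$, lower-bound $p$ by the probability that this one action lands in $\mathcal{C}$, and compute that probability as $\binom{n-1}{\lceil\log(n)\rceil-1}/\binom{n}{\lceil\log(n)\rceil} = \lceil\log(n)\rceil/n$. The added symmetry and complementary-count sanity checks are fine but not needed beyond what the paper already does.
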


While finding an exact maximizer through sampling may not always occur, the rewards of near-optimal actions can still be similar to those obtained from an optimal action. Therefore, the difference between stochastic maximization and exact maximization might be a more informative metric than just the probability of finding an exact maximizer. Thus, at time step $t$, given state $\state$ and the current estimated Q-function $Q_t$, we define the estimation error as $\beta_t(\state)$, as follows:
\begin{equation}
\label{beta_t_def}
\beta_t(\state) \triangleq \max _{\action \in \mathcal{A}} Q_t\left(\state, \action\right) - \stochmax _{\action \in \mathcal{A}} Q_t\left(\state, \action\right).
\end{equation}
Furthermore, we define the similarity ratio $\omega_t(\state)$, as follows:
\begin{equation}
\label{omega_t_def}
\omega_t(\state) \triangleq \stochmax _{\action \in \mathcal{A}} Q_t\left(\state, \action\right) / \max _{\action \in \mathcal{A}} Q_t\left(\state, \action\right).
\end{equation}
It can be seen from the definitions that $\beta_t(\state) \geq 0$ and $\omega_t(\state) \leq 1$. While sampling the exact maximizer is not always possible, near-optimal actions may yield near-optimal values, providing good approximations, i.e., $\beta_t(\state) \approx 0$ and $\omega_t(\state) \approx 1$. In general, this difference depends on the value distribution over the actions. %

While we do not make any specific assumptions about the value distribution in our work, we note that with some simplifying assumptions on the value distributions over the actions, one can derive more specialized guarantees. For example, assuming that the rewards are uniformly distributed over the actions, we demonstrate in \cref{stoch_max_with_uniform} that for a given discrete state $\state$, if the values of the sampled actions independently follow a uniform distribution from the interval $[Q_t(\state,\action_t^{\star})-b_t(\state), Q_t(\state,\action_t^{\star})]$, where $b_t(\state)$ represents the range of the $Q_t(\state,.)$ values over the actions in state $\state$ at time step $t$, then the expected value of $\beta_t(\state)$, even without memory, is:
$ \mathbb{E}\left[\beta_t(\state) \mid \state \right] \leq \frac{b_t(\state)}{\lceil\log(n)\rceil+1}.
$
Furthermore, we empirically demonstrate that for the considered control problems, the difference $\beta_t(\state)$ is not large, and the ratio $\omega_t(\state)$ is close to one, as shown in Section \ref{stochmax_experiments}.

\subsection{Stochastic Maximization with Memory}

While memoryless stochastic maximization could approach the maximum value or find it with the probability $p$, lower-bounded in Lemma \ref{proba_lemma}, it does not converge to an exact maximization, as it keeps sampling purely at random, as can be seen in Fig. \ref{reward:uniform} in Appendix \ref{uniform_appendx_exp}. However, memory-based stochastic maximization, i.e., $\mathcal{C}=\mathcal{R}\cup\mathcal{M}$ with $\mathcal{M} \neq \emptyset$, can become an exact maximization when the Q-function becomes stable, as we state in the Corollary \ref{cor_stable}, which we prove in Appendix \ref{corollary_proof}, and as confirmed in Fig. \ref{reward:uniform}. 
\begin{definition}
A Q-function is considered stable for a given time range and state $\state$ when its maximizing action in that state remains unchanged for all subsequent steps within that time, even if the Q-function's values themselves change.
\end{definition}
A straightforward example of a stable Q-function occurs during validation periods when no function updates are performed. However, in general, a stable Q-function does not have to be static and might still vary over the rounds; the critical characteristic is that its maximizing action remains the same even when its values are updated. Although the $\stochmax$ has sub-linear complexity compared to the $\max$, without any assumption of the value distributions, the following corollary shows that, on average, for a stable Q-function, after a certain number of iterations, the output of the $\stochmax$ matches precisely the output of $\max$.

\begin{corollary}
\label{cor_stable}
For a given state $\state$, assuming a time range where the Q-function becomes stable in that state, $\beta_t(\state)$ is expected to converge to zero after $\frac{n}{\lceil\log(n)\rceil}$ iterations.
\end{corollary}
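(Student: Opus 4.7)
The plan is to combine Lemma \ref{proba_lemma} with a monotonicity property of the memory set $\mathcal{M}$: once the optimal action for $\state$ enters $\mathcal{M}$, it never leaves during the stable window. Let $\action^{\star}$ denote the maximizer of $Q_t(\state, \cdot)$, which by the stability hypothesis is the same action for every $t$ in the window.

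First, I would invoke Lemma \ref{proba_lemma}: on each iteration where $\stochmax$ is evaluated at $\state$, the uniformly random subset $\mathcal{R}$ of size $\lceil \log(n) \rceil$ contains $\action^{\star}$ with probability at least $p \geq \lceil \log(n) \rceil / n$, and these draws are independent across iterations. The first iteration $\tau$ on which $\action^{\star} \in \mathcal{R}$ is therefore stochastically dominated by a geometric random variable with parameter $p$, giving $\mathbb{E}[\tau] \leq 1/p \leq n / \lceil \log(n) \rceil$.

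Second, I would argue that on iteration $\tau$ we have $\action^{\star} \in \mathcal{C} = \mathcal{R} \cup \mathcal{M}$ and, since $\action^{\star}$ attains the global maximum of $Q_{\tau}(\state, \cdot)$, the stochastic maximum equals the true maximum, so $\stochargmax$ returns $\action^{\star}$. Consequently $\action^{\star}$ is exploited at $\state$ and enters $\mathcal{M}$ from the next visit to $\state$ onward. A short induction then shows that $\action^{\star}$ stays in $\mathcal{M}$ for the remainder of the stable window: whenever $\action^{\star} \in \mathcal{M}$ one has $\action^{\star} \in \mathcal{C}$, so $\stochargmax$ again returns $\action^{\star}$, which keeps it among the most recently exploited actions at $\state$. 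Therefore $\beta_t(\state) = 0$ for every $t \geq \tau$ in the window, and taking expectations gives the stated bound of $n / \lceil \log(n) \rceil$ iterations.

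The main obstacle is the memory bookkeeping around $\varepsilon$-greedy exploration: if a random-exploration step could evict $\action^{\star}$ from the pool of latest exploited actions, the induction would break. This is resolved by the paper's convention that $\mathcal{M}$ tracks only actions produced by $\stochargmax$, so exploration steps leave the exploited set unchanged and the monotonicity argument goes through. A secondary concern is that Lemma \ref{proba_lemma} is stated for a single draw of $\mathcal{R}$, so I would briefly note that successive draws of $\mathcal{R}$ at $\state$ are independent by construction, which is what justifies the geometric tail bound.
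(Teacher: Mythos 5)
Your proposal is correct and follows essentially the same route as the paper's own proof: a geometric waiting-time argument via Lemma \ref{proba_lemma} bounding the expected number of draws until $\action^{\star}\in\mathcal{R}$ by $n/\lceil\log(n)\rceil$, followed by the observation that once $\action^{\star}$ enters the memory set $\mathcal{M}$ it is retained, so $\stochmax$ thereafter coincides with $\max$ and $\beta_t(\state)=0$. Your added care about the induction keeping $\action^{\star}$ in $\mathcal{M}$ and about exploration steps not evicting it makes explicit details the paper's proof leaves implicit, but it is the same argument.
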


Recalling the definition of the similarity ratio $\omega_t$, it follows that $\omega_t(\state) = 1 - \beta(s)/\max_{\action \in \aset}Q_t(\state,\action)$. Therefore, for a given state $\state$, where the Q-function becomes stable, given the boundedness of iterates in Q-learning, it is expected that $\omega_t$ converges to one. This observation was confirmed, even with continuous states and using neural networks as function approximators, in Section \ref{stochmax_experiments}.

\section{Stochastic Q-learning Convergence}

In this section, we analyze the convergence of the Stochastic Q-learning, described in Algorithm \ref{alg:stoc_q_learning}. This algorithm employs the policy $\pi_Q^S(\state)$, as defined in Eq.~(\ref{eq:stochargmaxPolicy}), with $\varepsilon_{\state}>0$ to guarantee that $\mathbb{P}_{\pi}[\action_t = \action \mid \state_t = \state] > 0$ for all state-action pairs $(\state,\action) \in \sset \times \aset$. The value update rule, on the other hand, uses the update rule specified in Eq. (\ref{eq:stoch_q_updates}). 

In the convergence analysis, we focus on memoryless maximization. While the $\stochargmax$ operator for action selection can be employed with or without memory, we assume a memoryless $\stochmax$ operator for value updates, which means that value updates are performed by maximizing over a randomly sampled subset of actions from $\mathcal{A}$, sampled independently from both the next state $\state'$ and the set used for the $\stochargmax$.

For a stochastic variable subset of actions $\mathcal{C} \subseteq \mathcal{A}$, following some probability distribution $\mathbb{P}: 2^\mathcal{A} \rightarrow [0,1]$, we consider, without loss of generality $Q(., \emptyset)=0$, and define, according to $\mathbb{P}$, a target Q-function, denoted as $Q^*$, as:
\begin{equation}
\label{qstar}
Q^*(\state, \action) \triangleq \mathbb{E}\left[r(\state, \action)+\gamma \max _{b \in \mathcal{C} \sim \mathbb{P}} Q^*(\state', b) \mid \state, \action \right].
\end{equation}

\begin{remark}
The $Q^*$ defined above depends on the sampling distribution $\mathbb{P}$. Therefore, it does not represent the optimal value function of the original MDP problem; instead, it is optimal under the condition where only a random subset of actions following the distribution $\mathbb{P}$ is available to the agent at each time step. However, as the sampling cardinality increases, it increasingly better approximates the optimal value function of the original MDP and fully recovers the optimal Q-function of the original problem when the sampling distribution becomes $\mathbb{P}(\mathcal{A})=1$.
\end{remark}

The following theorem states the convergence of the iterates $Q_t$ of Stochastic Q-learning with memoryless stochastic maximization to the $Q^*$, defined in Eq. \ref{qstar}, for any sampling distribution $\mathbb{P}$, regardless of the cardinality.

\begin{theorem}
\label{theorem}
For a finite MDP, as described in Section \ref{sec:probdesc}, let $\mathcal{C}$ be a randomly independently sampled subset of actions from $\mathcal{A}$, of any cardinality, following any distribution $\mathbb{P}$, exclusively sampled for the value updates, for the Stochastic Q-learning, as described in Algorithm \ref{alg:stoc_q_learning}, given by the following update rule:
\begin{align}
&Q_{t+1}\left(\state_t, \action_t\right) = (1-\alpha_t\left(\state_t, \action_t\right))Q_t\left(\state_t, \action_t\right) \nonumber \\
&+\alpha_t\left(\state_t, \action_t\right)\left[r_t+\gamma \max _{b \in \mathcal{C} \sim \mathbb{P}} Q_t\left(\state_{t+1}, \action\right)\right], \nonumber
\end{align}
given any initial estimate $Q_0$, $Q_{t}$ converges with probability 1 to $Q^*$, defined in Eq. (\ref{qstar}), as long as
$
\sum_t \alpha_t(\state, \action)=\infty$ and $\sum_t \alpha_t^2(\state, \action)<\infty 
$
for all $(\state,\action) \in \sset \times \aset$.
\end{theorem}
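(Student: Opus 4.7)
The plan is to cast the update as a noisy stochastic approximation iteration for a modified Bellman operator and then invoke the standard stochastic approximation convergence theorem for contraction mappings (e.g., the Jaakkola--Jordan--Singh / Tsitsiklis-style result that underlies the original Q-learning proof of Watkins and Dayan). Concretely, I would introduce the operator
\begin{equation*}
(H Q)(\state,\action) \triangleq \mathbb{E}\!\left[r(\state,\action) + \gamma \max_{b \in \mathcal{C} \sim \mathbb{P}} Q(\state', b) \,\Big|\, \state, \action\right],
\end{equation*}
where the expectation is taken jointly over the transition to $\state'$ and the independent draw of the subset $\mathcal{C}$ from $\mathbb{P}$. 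By Eq.~(\ref{qstar}), $Q^*$ is by definition a fixed point of $H$. The target of the proof is to show (i) $H$ is a $\gamma$-contraction in the sup-norm, so the fixed point is unique and equal to $Q^*$, and (ii) the Stochastic Q-learning iterate can be written as $Q_{t+1}(\state_t,\action_t) = (1-\alpha_t) Q_t(\state_t,\action_t) + \alpha_t\bigl[(HQ_t)(\state_t,\action_t) + w_t\bigr]$ for a martingale-difference noise $w_t$ with bounded conditional variance, which is exactly the hypothesis needed to apply the stochastic approximation lemma.

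First, I would verify the contraction property. For any two bounded functions $Q_1, Q_2$ and any fixed subset $\mathcal{C}$, one has the pointwise bound $|\max_{b\in\mathcal{C}} Q_1(\state',b) - \max_{b\in\mathcal{C}} Q_2(\state',b)| \leq \max_{b\in\mathcal{C}} |Q_1(\state',b) - Q_2(\state',b)| \leq \|Q_1 - Q_2\|_\infty$. Because this bound is uniform over the choice of $\mathcal{C}$ and over $\state'$, taking expectation over $(\state',\mathcal{C})$ and multiplying by $\gamma$ yields $\|HQ_1 - HQ_2\|_\infty \leq \gamma \|Q_1 - Q_2\|_\infty$. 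This step is essentially the same as in the classical Q-learning proof, with the only modification being that the inner $\max$ is over a random subset rather than the whole action set; crucially the bound does not depend on $\mathcal{C}$, which is why the contraction constant is unchanged.

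Second, I would define the noise term
\begin{equation*}
w_t \triangleq r_t + \gamma \max_{b \in \mathcal{C}_t} Q_t(\state_{t+1},b) - (HQ_t)(\state_t,\action_t),
\end{equation*}
where $\mathcal{C}_t$ is the subset sampled independently for the $\stochmax$ update at step $t$. Conditioned on the history $\mathcal{F}_t$ (which includes $Q_t$, $\state_t$, $\action_t$, and all past samples but not $\state_{t+1}$ or $\mathcal{C}_t$), the independence assumption on $\mathcal{C}_t$ together with the Markov property gives $\mathbb{E}[w_t \mid \mathcal{F}_t] = 0$. Boundedness of rewards and the standard Q-learning argument that iterates stay bounded (by induction, since each update is a convex combination of a bounded Q value and a bounded target) yield $\mathbb{E}[w_t^2 \mid \mathcal{F}_t] \leq C(1 + \|Q_t\|_\infty^2)$ for some constant $C$. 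Together with the Robbins--Monro conditions on $\alpha_t$ and the fact that $\pi_Q^S$ with $\varepsilon_{\state} > 0$ ensures every $(\state,\action)$ is visited infinitely often with probability one, the stochastic approximation theorem (e.g., Proposition 4.4 of Bertsekas--Tsitsiklis or the lemma in Jaakkola--Jordan--Singh 1994) gives $Q_t \to Q^*$ almost surely.

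The main obstacle I anticipate is bookkeeping the independence and measurability of the two random subsets used for $\stochargmax$ (action selection) and $\stochmax$ (value update). The theorem explicitly assumes the value-update subset $\mathcal{C}$ is sampled independently of both the next state and the action-selection subset, which is exactly what is needed to make $w_t$ a martingale difference and to identify the conditional expectation of the bootstrap target with $(HQ_t)(\state_t,\action_t)$. Once this independence structure is made precise, the remainder of the argument is a direct adaptation of the classical Watkins--Dayan / Tsitsiklis proof: all the analysis goes through with the ordinary Bellman operator replaced by the randomized operator $H$, and the limit is the randomized fixed point $Q^*$ rather than the true optimal value function of the MDP.
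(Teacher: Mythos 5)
Your proposal is correct and follows essentially the same route as the paper: the operator $H$ you define is exactly the paper's $\Phi$ (expectation over both the transition and the subset distribution $\mathbb{P}$), the sup-norm $\gamma$-contraction argument is the paper's Lemma~\ref{contraction}, and the reduction to the Jaakkola--Jordan--Singh stochastic-approximation theorem via a martingale-difference noise term with variance bounded by $C(1+\|\Delta_t\|_\infty)^2$ is the same final step. The only cosmetic difference is that you center the noise at $(HQ_t)(\state,\action)$ while the paper works with $F_t = r + \gamma\max_{b\in\mathcal{C}}Q_t(\state',b) - Q^*(\state,\action)$ and bounds $\|\mathbb{E}[F_t\mid\mathcal{F}_t]\|_\infty \le \gamma\|\Delta_t\|_\infty$ directly; these are equivalent formulations of the same verification.
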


The theorem's result demonstrates that for any cardinality of actions, Stochastic Q-learning converges to $Q^*$, as defined in Eq. (\ref{qstar}), which recovers the convergence guarantees of Q-learning when the sampling distribution is $\mathbb{P}(\mathcal{A})=1$.

\begin{remark}
In principle, any size can be used, balancing time complexity and approximation. Our empirical experiments focused on $\log(n)$ to illustrate the method's ability to recover Q-learning, even with a few actions. Using $\sqrt{n}$ will approach the value function of Q-learning more closely compared to using $\log(n)$, albeit at the cost of higher complexity than $\log(n)$.    
\end{remark}

The theorem shows that even with memoryless stochastic maximization, using randomly sampled $\mathcal{O}(\log(n))$ actions, the convergence is still guaranteed. However, relying on memory-based stochastic maximization helps minimize the approximation error in stochastic maximization, as shown in \cref{cor_stable}, and outperforms Q-learning as shown in the experiments in Section \ref{Stochastic_Q_learning_Reward_Analysis}. 

In the following, we provide a sketch of the proof addressing the extra stochasticity due to stochastic maximization. The full proof is provided in Appendix \ref{appendix_theorem_proof}. 

We tackle the additional stochasticity depending on the sampling distribution $\mathbb{P}$, by defining an operator function $\Phi$, which for any $q: \sset \times \aset \rightarrow \reals$, is as follows:
\begin{align}
\label{Phi_operator}
(\Phi q)(\state, \action) &\triangleq \sum_{\mathcal{C}\in 2^{\mathcal{A}}}\mathbb{P}(\mathcal{C})\sum_{\state' \in \sset} \mathcal{P}(\state' \mid \state, \action) \nonumber \\ 
&\quad \quad \quad \left[r(\state, \action)+\gamma \max _{b \in \mathcal{C} } q(\state', b) \right].
\end{align}
We then demonstrate that it is a contraction in the sup-norm, as shown in \cref{contraction}, which we prove in Appendix \ref{proof_lemma_contraction}.
\begin{lemma}
\label{contraction}
The operator $\Phi$, defined in Eq. \eqref{Phi_operator},
is a contraction in the sup-norm, with a contraction factor $\gamma$, i.e.,
$
\left\|\Phi q_1-\Phi q_2\right\|_{\infty} \leq \gamma\left\|q_1-q_2\right\|_{\infty}.
$
\end{lemma}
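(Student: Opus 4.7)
The plan is to carry out a direct calculation modeled on the standard Bellman-operator contraction argument, with the only new wrinkle being the extra average over the random subset $\mathcal{C}$. I would first fix an arbitrary state-action pair $(\state,\action)$ and subtract $(\Phi q_1)(\state,\action)-(\Phi q_2)(\state,\action)$. The reward $r(\state,\action)$ does not depend on $q$, so it cancels out, leaving the expression
\begin{equation*}
\gamma\sum_{\mathcal{C}\in 2^{\mathcal{A}}}\mathbb{P}(\mathcal{C})\sum_{\state' \in \sset} \mathcal{P}(\state' \mid \state, \action)\Bigl[\max_{b\in\mathcal{C}}q_1(\state',b)-\max_{b\in\mathcal{C}}q_2(\state',b)\Bigr].
\end{equation*}

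Next I would take absolute values and pass them inside the two nonnegative weighted sums (using $\mathbb{P}(\mathcal{C})\geq 0$, $\mathcal{P}(\state'\mid\state,\action)\geq 0$, and the triangle inequality). The key pointwise inequality I would invoke is the elementary fact that for any two real-valued functions $f,g$ on a finite set $\mathcal{C}$,
\begin{equation*}
\Bigl|\max_{b\in\mathcal{C}}f(b)-\max_{b\in\mathcal{C}}g(b)\Bigr|\leq \max_{b\in\mathcal{C}}|f(b)-g(b)|,
\end{equation*}
applied with $f=q_1(\state',\cdot)$ and $g=q_2(\state',\cdot)$. The right-hand side is in turn bounded by $\|q_1-q_2\|_\infty$ independently of $\mathcal{C}$ and $\state'$.

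Plugging this bound back in and using the two normalizations $\sum_{\mathcal{C}}\mathbb{P}(\mathcal{C})=1$ and $\sum_{\state'}\mathcal{P}(\state'\mid\state,\action)=1$ collapses both weighted sums and yields
\begin{equation*}
\bigl|(\Phi q_1)(\state,\action)-(\Phi q_2)(\state,\action)\bigr|\leq \gamma\,\|q_1-q_2\|_\infty.
\end{equation*}
Taking the supremum of the left-hand side over all $(\state,\action)\in \sset\times\aset$ gives the claimed contraction bound.

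I do not expect any real obstacle: the only non-standard ingredient compared with the classical Bellman contraction proof is the outer average over $\mathcal{C}\sim\mathbb{P}$, but since this is just a convex combination over subsets, it behaves identically to the transition average under the triangle inequality. The one place to be careful is the pointwise inequality for the difference of maxima, but this is a standard fact and needs no assumption about $\mathbb{P}$ or the cardinality of $\mathcal{C}$, which is exactly why the contraction factor remains $\gamma$ regardless of how aggressively the action set is subsampled.
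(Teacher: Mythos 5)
Your proposal is correct and follows essentially the same route as the paper's proof: cancel the reward term, pull absolute values inside the two nonnegative weighted sums, bound the difference of maxima over $\mathcal{C}$ by $\max_{z,b}|q_1(z,b)-q_2(z,b)| = \|q_1-q_2\|_\infty$, and collapse the sums using normalization of $\mathbb{P}$ and $\mathcal{P}$. The elementary inequality $\bigl|\max_b f(b)-\max_b g(b)\bigr|\leq \max_b|f(b)-g(b)|$ you single out is exactly the step the paper uses implicitly when passing from the difference of maxima to the sup-norm.
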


We then use the above lemma to establish the convergence of Stochastic Q-learning. Given any initial estimate $Q_0$, using the considered update rule for Stochastic Q-learning, subtracting from both sides $Q^*\left(\state_t, \action_t\right)$ and letting
$\Delta_t(\state, a) \triangleq Q_t(\state, a)-Q^*(\state, a)$, yields
\begin{align}
\Delta_{t+1}\left(\state_t, \action_t\right) &= \left(1-\alpha_t\left(\state_t, \action_t\right)\right) \Delta_t\left(\state_t, \action_t\right) \nonumber \\
&\quad + \alpha_t(\state_t, \action_t)F_t(\state_t, \action_t), \nonumber \text{ with }
\end{align}
\begin{equation}
F_t(\state, \action) \triangleq r(\state, \action)+\gamma \max _{b \in \mathcal{C}} Q_t\left(\state', b\right)-Q^*\left(\state, \action\right).
\end{equation}
With $\mathcal{F}_t$ representing the past at time $t$,
$$
\begin{aligned}
\mathbb{E}\left[F_t(\state, \action) \mid \mathcal{F}_t\right] 
&=\sum_{\mathcal{C}\in 2^{\mathcal{A}}}\mathbb{P}(\mathcal{C}) \sum_{\state' \in \sset} \mathcal{P}(\state' \mid \state, \action)\left[F_t(\state, \action)\right] \\
& =\left(\Phi (Q_t) \right)(\state, \action)- Q^*(\state, \action).
\end{aligned}
$$
Using the fact that $Q^*=\Phi Q^*$ and Lemma \ref{contraction},
\begin{equation}
\label{eq:condition_1}
\left\|\mathbb{E}\left[F_t(\state, \action) \mid \mathcal{F}_t\right]\right\|_{\infty} \leq \gamma\left\|Q_t-Q^*\right\|_{\infty} =\gamma\left\|\Delta_t\right\|_{\infty}.  
\end{equation}
Given that $r$ is bounded,  its variance is bounded by some constant $B$. Thus, as shown in Appendix \ref{theorem_proof_appendix}, for $C = \max\{B  + \gamma^2  \|Q^*\|_{\infty}^2, \gamma^2\}$,
$\label{eq:condition_2}
 \operatorname{var}\left[F_t(\state, \action) \mid \mathcal{F}_t\right] \leq C (1+\|\Delta_t\|_{\infty})^2.$
Then, by this inequality, Eq. \eqref{eq:condition_1}, and Theorem 1 in \citep{jaakkola1993convergence}, $\Delta_t$ converges to zero with probability 1, i.e., $Q_t$ converges to $Q^*$ with probability 1.

\section{Experiments}

We compare stochastic maximization to exact maximization and evaluate the proposed RL algorithms in Gymnasium \citep{brockman2016openai} and MuJoCo \citep{todorov2012mujoco} environments.
The stochastic tabular Q-learning approaches are tested on CliffWalking-v0, FrozenLake-v1, and a generated MDP environment. Additionally, the stochastic deep Q-network approaches are tested on control tasks and compared against their deterministic counterparts, as well as against DDPG \cite{lillicrap2015continuous}, A2C \cite{mnih2016asynchronous}, and PPO \cite{schulman2017proximal}, using Stable-Baselines implementations \cite{stable-baselines}, which can directly handle continuous action spaces. Further details can be found in Appendix \ref{appendix_exp_details}.

\subsection{Stochastic Q-learning Average Return}
\label{Stochastic_Q_learning_Reward_Analysis}

We test Stochastic Q-learning, Stochastic Double Q-learning, and Stochastic Sarsa in environments with discrete states and actions. Interestingly, as shown in Fig. \ref{cumul_reward:frozen}, our stochastic algorithms outperform their deterministic counterparts. Furthermore, we observe that Stochastic Q-learning outperforms all the methods considered regarding the cumulative rewards in the FrozenLake-v1. Moreover, in the CliffWalking-v0 (as shown in Fig. \ref{ap:fig:clif_walking}), as well as for the generated MDP environment with 256 actions (as shown in Fig. \ref{ap:fig:mdp}), all the stochastic and non-stochastic methods reach the optimal policy in a similar number of steps. 

\begin{figure}[t]
\begin{center}
\includegraphics[width=6cm]{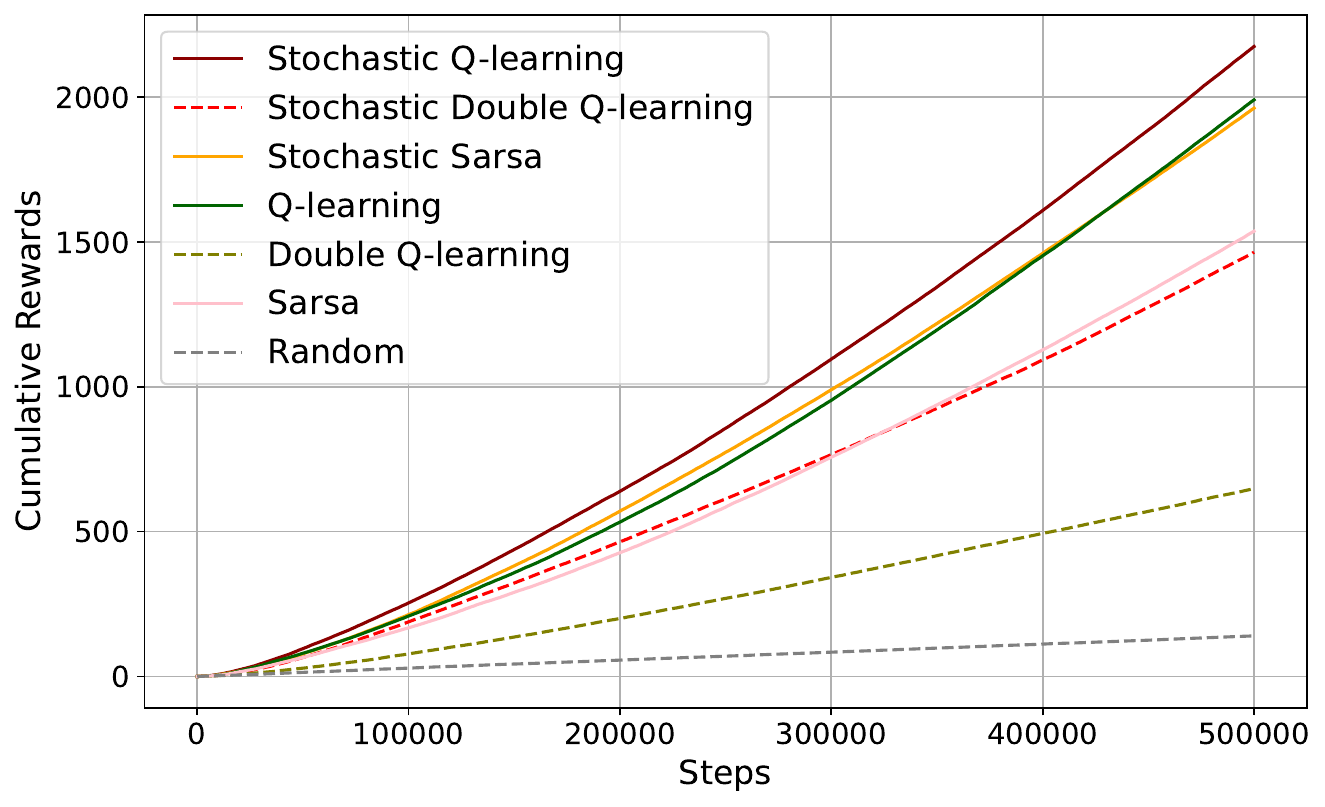}
\end{center}
\caption{\small Comparison of stochastic vs. non-stochastic value-based variants on the FrozenLake-v1, with steps on the x-axis and cumulative rewards on the y-axis.}
\label{cumul_reward:frozen}
\vspace{-.2in}
\end{figure}

\subsection{Exponential Wall Time Speedup}
\label{exponential_time_speedup}

\begin{minipage}[t]{0.21\textwidth}
Stochastic maximization methods exhibit logarithmic complexity regarding the number of actions. Therefore, StochDQN and StochDDQN, which apply these techniques for action selection and updates, have exponentially faster execution times than DQN and DDQN, as confirmed in Fig.~\ref{fig:selection_time}. 
\end{minipage}
\hspace{0.01\textwidth}
\begin{minipage}[t]{0.25\textwidth}
\vspace{-.4in}
\begin{figure}[H]
\centering
\includegraphics[width=4.7cm]{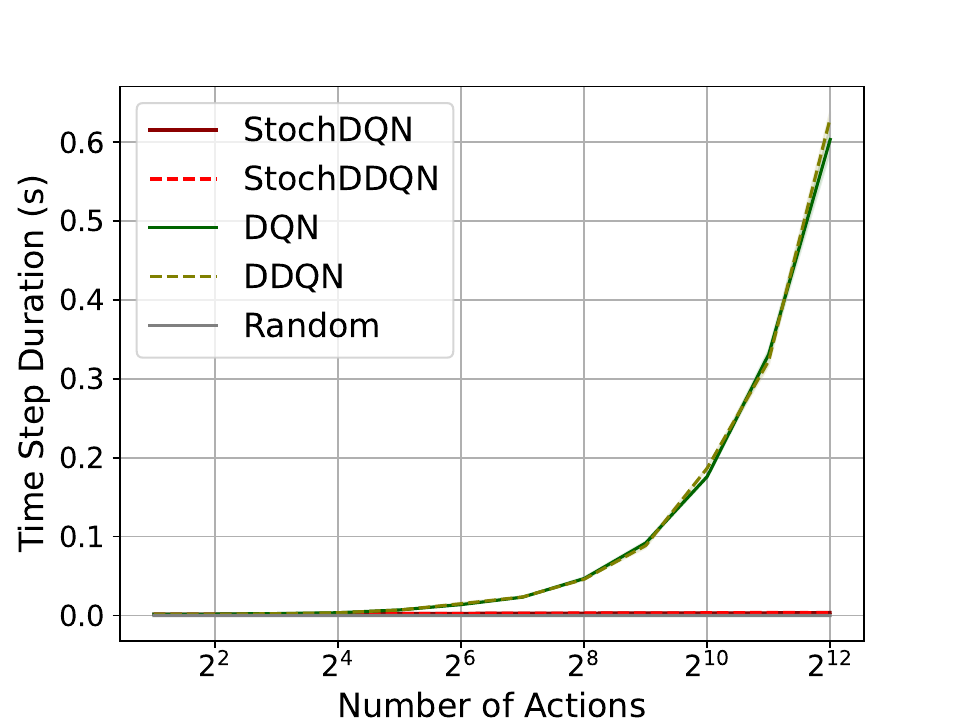}
\caption{\small Comparison of wall time in seconds of stochastic and non-stochastic DQN methods on various action set sizes.}
\label{fig:selection_time}
\end{figure}
\vspace{-.1 in}
\end{minipage}
For the time duration of action selection alone, please refer to Appendix~\ref{appendix_experiment_time}. The time analysis results show that the proposed methods are nearly as fast as a random algorithm that selects actions randomly. Specifically, in the experiments with the InvertedPendulum-v4, the stochastic methods took around 0.003 seconds per step for a set of 1000 actions, while the non-stochastic methods took 0.18 seconds, which indicates that the stochastic versions are 60 times faster than their deterministic counterparts. Furthermore, for the  HalfCheetah-v4 experiment, we considered 4096 actions, where one (D)DQN step takes 0.6 seconds, needing around 17 hours to run for 100,000 steps, while the Stoch(D)DQN needs around 2 hours to finish the same 100,000 steps. In other words, we can easily run for 10x more steps in the same period (seconds). This makes the stochastic methods more practical, especially with large action spaces.

\subsection{Stochastic Deep Q-network Average Return}
\label{Stochastic_DQN_Reward_Analysis}

Fig. \ref{reward:pendulum} shows the performance of various RL algorithms on the InvertedPendulum-v4 task, which has 512 actions. StochDQN achieves the optimal average return in fewer steps than DQN, with a lower per-step time advantage (as shown in \cref{exponential_time_speedup}). Interestingly, while DDQN struggles, StochDDQN nearly reaches the optimal average return, demonstrating the effectiveness of stochasticity. StochDQN and StochDDQN significantly outperform DDQN, A2C, and PPO by obtaining higher average returns in fewer steps. 
Similarly, Fig. \ref{reward:ap:cheetah} in \cref{appendix_experiement_dqn_rewards} shows the results for the HalfCheetah-v4 task, which has 4096 actions. Stochastic methods, particularly StochDDQN, achieve results comparable to the non-stochastic methods. Notably, all DQN methods (stochastic and non-stochastic) outperform PPO and A2C, highlighting their efficiency in such scenarios.

\begin{remark}
While comparing them falls outside the scope of our work, we note that DDQN was proposed to mitigate the inherent overestimation in DQN. However, exchanging overestimation for underestimation bias is not always beneficial, as our results demonstrate and as shown in other studies such as \cite{lan2020maxmin}.
\end{remark}

\begin{figure}[t]
\begin{center}
\includegraphics[width=6.6cm]{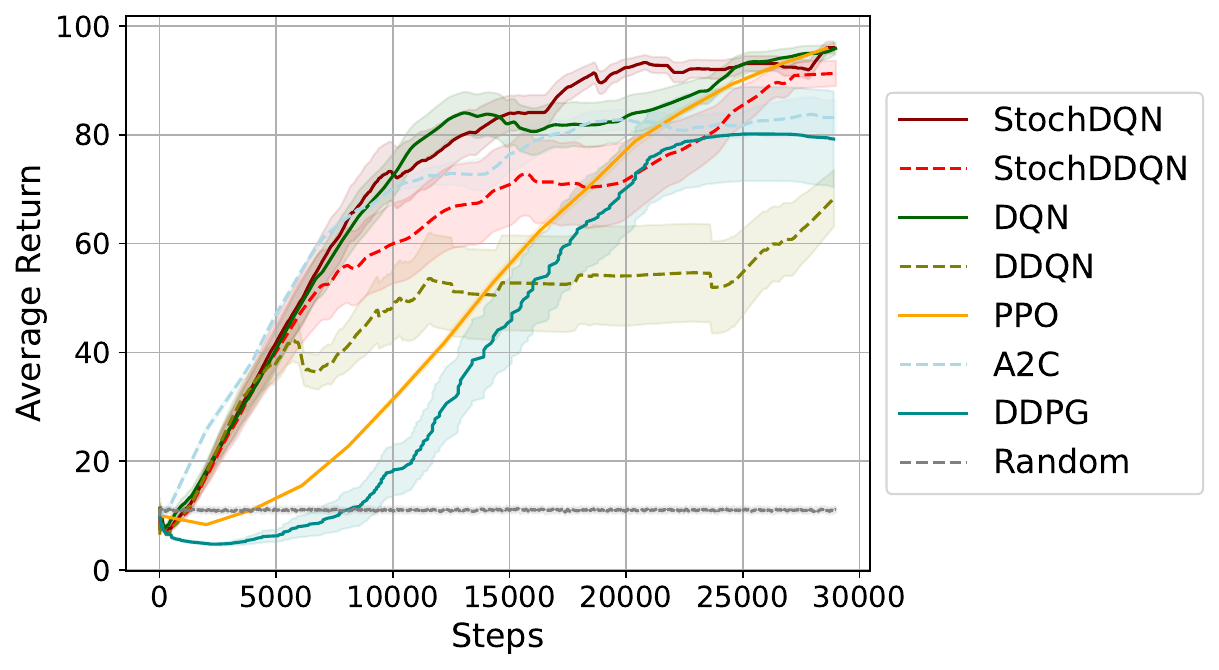}
\end{center}
\caption{\small Comparison of stochastic DQN variants against other RL algorithms on the InvertedPendulum-v4, with 1000 actions, with steps on the x-axis and average returns on the y-axis.}
\label{reward:pendulum}
\vspace{-.2in}
\end{figure}

\subsection{Stochastic Maximization}
\label{stochmax_experiments}

\begin{minipage}[t]{0.21\textwidth}
This section analyzes stochastic maximization by tracking returned values of $\omega_t$ (Eq. \eqref{omega_t_def}) across the steps. As shown in Fig. \ref{omegaandbeta}, for StochDQN, for the InvertedPendulum-v4, $\omega_t$ approaches one rapidly, similarly for the HalfCheetah-v4, as shown in Appendix \ref{appendix_experiement_stochmax}.
\end{minipage}
\hspace{0.01\textwidth}
\begin{minipage}[t]{0.25\textwidth}
\vspace{-.3in}
\begin{figure}[H]
\centering
\includegraphics[width=4.7cm]{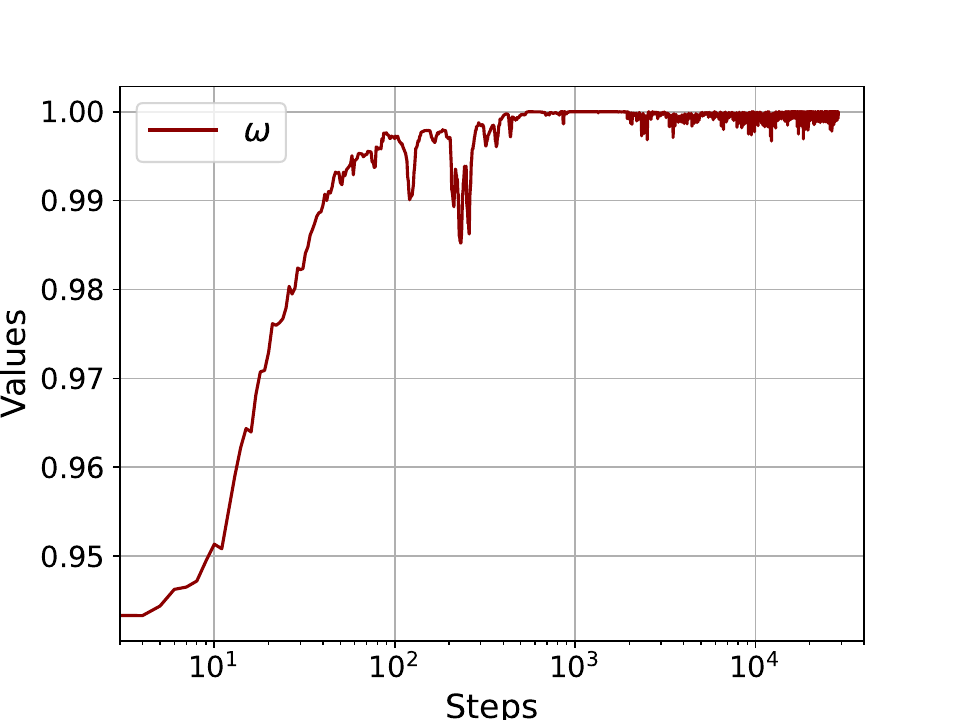}
\caption{\small The $\stochmax$ and $\max$ ratio values tracked over the steps on the InvertedPendulum-v4.}
\label{omegaandbeta}
\end{figure}
\vspace{-.1in}
\end{minipage}
Furthermore, we track the returned values of the difference $\beta_t$ (Eq. \eqref{beta_t_def}) and show that it is bounded by small values in both environments, as illustrated in Appendix \ref{appendix_experiement_stochmax}.

\section{Discussion}

In this work, we focus on adapting value-based methods, which excel in generalization compared to actor-based approaches \cite{dulac2015deep}. However, this advantage comes at the cost of lower computational efficiency due to the maximization operation required for action selection and value function updates. Therefore, our primary motivation is to provide a computationally efficient alternative for situations with general large discrete action spaces.

We focus mainly on Q-learning-like methods among value-based approaches due to their off-policy nature and proven success in various applications. We demonstrate that these methods can be applied to large discrete action spaces while achieving exponentially lower complexity and maintaining good performance. Furthermore, our proposed stochastic maximization method performs well even when applied to the on-policy Sarsa algorithm, extending its potential beyond off-policy methods. Consequently, the suggested stochastic approach offers broader applicability to other value-based approaches, resulting in lower complexity and improved efficiency with large discrete action spaces.

While the primary goal of this work is to reduce the complexity and wall time of Q-learning-like algorithms, our experiments revealed that stochastic methods not only achieve shorter step times (in seconds) but also, in some cases, yield higher rewards and exhibit faster convergence in terms of the number of steps compared to other methods. 
These improvements can be attributed to several factors. Firstly, introducing more stochasticity into the greedy choice through $\stochargmax$ enhances exploration. Secondly, Stochastic Q-learning specifically helps to reduce the inherent overestimation in Q-learning-like methods \citep{hasselt2010double, lan2020maxmin, wang2021adaptive}. This reduction is achieved using $\stochmax$, a lower bound to the $\max$ operation.

Q-learning methods, focused initially on discrete actions, can be adapted to tackle continuous problems with discretization techniques and stochastic maximization. Our control experiments show that Q-network methods with discretization achieve superior performance to algorithms with continuous actions, such as PPO, by obtaining higher rewards in fewer steps, which aligns with observations in previous works that highlight the potential of discretization for solving continuous control problems \cite{dulac2015deep, tavakoli2018action, tang2020discretizing}. Notably, the logarithmic complexity of the proposed stochastic methods concerning the number of considered actions makes them well-suited for scenarios with finer-grained discretization, leading to more practical implementations.

\section{Conclusion}
We propose adapting Q-learning-like methods to mitigate the computational bottleneck associated with the $\max$ and $\argmax$ operations in these methods. By reducing the maximization complexity from linear to sublinear using $\stochmax$ and $\stochargmax$, we pave the way for practical and efficient value-based RL for large discrete action spaces. We prove the convergence of Stochastic Q-learning, analyze stochastic maximization, and empirically show that it performs well with significantly low complexity.


\section*{Impact Statement}
This paper presents work whose goal is to advance the field of Machine Learning. There are many potential societal consequences of our work, none which we feel must be specifically highlighted here.

\bibliographystyle{icml2024}

\bibliography{main}

\newpage
\onecolumn
\appendix

\section{Stochastic Q-learning Convergence Proofs}
\label{appendix_theorem_proof}

In this section, we prove Theorem \ref{theorem}, which states the convergence of Stochastic Q-learning. This algorithm uses a stochastic policy for action selection, employing a $\stochargmax$ with or without memory, possibly dependent on the current state $\state$. For value updates, it utilizes a $\stochmax$ without memory, independent of the following state $\state'$.

\subsection{Proof of Theorem \ref{theorem}}
\label{theorem_proof_appendix}

\begin{proof}
Stochastic Q-learning employs a stochastic policy in a given state $\state$, which use $\stochargmax$ operation, with or without memory $\mathcal{M}$, with probability $(1-\varepsilon_{\state})$, for $\varepsilon_{\state} > 0$, which can be summarized by the following equation:
\begin{equation}
\label{ap:eq:stochargmaxPolicy}
\pi^S_Q(\state) =
\begin{cases}
\text{play randomly} & \text{with probability } \epsilon_{\state} \\
\stochargmax_{\action \in \aset} Q(\state,\action) & \text{otherwise }. 
\end{cases}
\end{equation}
This policy, with $\varepsilon_{\state}>0$, ensures that $\mathbb{P}_{\pi}[\action_t = \action \mid \state_t = \state] > 0$ for all $(\state,\action) \in \sset \times \aset$. 

Furthermore, during the training, to update the Q-function, given any initial estimate $Q_0$, we consider a Stochastic Q-learning which uses $\stochmax$ operation as in the following stochastic update rule:
\begin{align}
&Q_{t+1}\left(\state_t, \action_t\right)=\left(1-\alpha_t\left(\state_t, \action_t\right)\right) Q_t\left(\state_t, \action_t\right) +\alpha_t\left(\state_t, \action_t\right)\left[r_t+ \gamma\stochmax _{b \in \mathcal{A}} Q_t\left(\state_{t+1}, b\right)\right] .    
\end{align}
For the function updates, we consider a $\stochmax$ without memory, which involves a $\max$ over a random subset of action $\mathcal{C}$ sampled from a set probability distribution $\mathbb{P}$ defined over the combinatorial space of actions, i.e., $\mathbb{P}: 2^\mathcal{A} \rightarrow [0,1]$, which can be a uniform distribution over the action sets of size $\lceil\log(n)\rceil$. 

Hence, for a random subset of actions $\mathcal{C}$, the update rule of Stochastic Q-learning can be written as:
\begin{align}
&Q_{t+1}\left(\state_t, \action_t\right)=\left(1-\alpha_t\left(\state_t, \action_t\right)\right) Q_t\left(\state_t, \action_t\right) +\alpha_t\left(\state_t, \action_t\right)\left[r_t+ \gamma \max _{b \in \mathcal{C}} Q_t\left(\state_{t+1}, b\right)\right] .    
\end{align}
We define an optimal Q-function, denoted as $Q^*$, as follows:
\begin{align}
Q^*(\state, \action) &= \mathbb{E}\left[r(\state, \action)+\gamma \text{ stoch}\max _{b \in \mathcal{A}} Q^*(\state', b) \mid \state, \action \right]\\
&= \mathbb{E}\left[r(\state, \action)+\gamma \max _{b \in \mathcal{C}} Q^*(\state', b) \mid \state, \action \right].
\end{align}
Subtracting from both sides $Q^*\left(\state_t, \action_t\right)$ and letting
\begin{align}
\Delta_t(\state, \action)=Q_t(\state, \action)-Q^*(\state, \action),
\end{align}
yields
\begin{align}
\Delta_{t+1}\left(\state_t, \action_t\right)= \left(1-\alpha_t\left(\state_t, \action_t\right)\right) \Delta_t\left(\state_t, \action_t\right) + \alpha_t(\state_t, \action_t)F_t(\state_t, \action_t),
\end{align}
with
\begin{equation}
F_t(\state, \action)=r(\state, \action)+\gamma \max _{b \in \mathcal{C}} Q_t\left(\state', b\right)-Q^*\left(\state, \action\right).
\end{equation}

For the transition probability distribution $\mathcal{P}: \sset \times \aset \times \sset \rightarrow [0,1]$, the set probability distribution $\mathbb{P}: 2^\mathcal{A} \rightarrow [0,1]$, the reward function $r: \sset \times \aset \rightarrow \reals$, and the discount factor, $\gamma \in [0,1]$, we define the following contraction operator $\Phi$, defined for a function $q: \sset \times \aset \rightarrow \reals$ as
\begin{equation}
 (\Phi q)(\state, \action)=\sum_{\mathcal{C}\in 2^{\mathcal{A}}}\mathbb{P}(\mathcal{C})\sum_{\state' \in \sset} \mathcal{P}(\state' \mid \state, \action)\left[r(\state, \action)+\gamma \max _{b \in \mathcal{C}} q(\state', b) \right].
\end{equation}
Therefore, with $\mathcal{F}_t$ representing the past at time step $t$,
$$
\begin{aligned}
\mathbb{E}\left[F_t(\state, \action) \mid \mathcal{F}_t\right] &=\sum_{\mathcal{C}\in 2^{\mathcal{A}}}\mathbb{P}(\mathcal{C}) \sum_{\state' \in \sset} \mathcal{P}(\state' \mid \state, \action)\left[r(\state, \action)+\gamma \max _{b \in \mathcal{C}} Q_t\left(\state', b\right)-Q^*\left(\state, \action\right)\right] \\
& =\left(\Phi (Q_t) \right)(\state, \action)- Q^*(\state, \action).
\end{aligned}
$$
Using the fact that $Q^*=\Phi Q^*$,
$$
\mathbb{E}\left[F_t(\state, \action) \mid \mathcal{F}_t\right]=\left(\Phi Q_t\right)(\state, \action)-\left(\Phi Q^*\right)(\state, \action) .
$$

It is now immediate from Lemma \ref{contraction}, which we prove in Appendix \ref{proof_lemma_contraction}, that
\begin{equation}
\label{eq:app:condition_1}
\left\|\mathbb{E}\left[F_t(\state, \action) \mid \mathcal{F}_t\right]\right\|_{\infty} \leq \gamma\left\|Q_t-Q^*\right\|_{\infty} =\gamma\left\|\Delta_t\right\|_{\infty}. 
\end{equation}
Moreover,
$$
\begin{aligned}
 \operatorname{var}\left[F_t(\state, \action) \mid \mathcal{F}_t\right] 
& =\mathbb{E}\left[\left(r(\state, \action)+\gamma \max _{b \in \mathcal{C}} Q_t(\state', b)-Q^*(\state, \action)-\left(\Phi Q_t\right)(\state, \action)+Q^*(\state, \action)\right)^2\mid \mathcal{F}_t\right] \\
& =\mathbb{E}\left[\left(r(\state, \action)+\gamma \max _{b \in \mathcal{C}} Q_t(\state', b)-\left(\Phi Q_t\right)(\state, \action)\right)^2\mid \mathcal{F}_t\right] \\
& =\operatorname{var}\left[r(\state, \action)+\gamma \max _{b \in \mathcal{C}} Q_t(\state', b) \mid \mathcal{F}_t\right] \\
&=\operatorname{var}\left[r(\state, \action)\mid \mathcal{F}_t\right]+\gamma^2\operatorname{var}\left[ \max _{b \in \mathcal{C}} Q_t(\state', b) \mid \mathcal{F}_t\right]+2\gamma\operatorname{cov}(r(\state, \action),\max _{b \in \mathcal{C}} Q_t(\state', b)\mid \mathcal{F}_t)\\
&=\operatorname{var}\left[r(\state, \action)\mid \mathcal{F}_t\right]+\gamma^2\operatorname{var}\left[ \max _{b \in \mathcal{C}} Q_t(\state', b) \mid \mathcal{F}_t\right].
\end{aligned}
$$
The last line follows from the fact that the randomness of $\max _{b \in \mathcal{C}} Q_t(\state', b)\mid \mathcal{F}_t$ only depends on the random set $\mathcal{C}$ and the next state $\state'$. Moreover, we consider the reward $r(\state, \action)$ independent of the set $\mathcal{C}$ and the next state $\state'$, by not using the same set $\mathcal{C}$ for both the action selection and the value update. 

Given that $r$ is bounded, its variance is bounded by some constant $B$. Therefore, 
$$
\begin{aligned}
 \operatorname{var}\left[F_t(\state, \action) \mid \mathcal{F}_t\right] 
& \leq B +\gamma^2\operatorname{var}\left[ \max _{b \in \mathcal{C}} Q_t(\state', b) \mid \mathcal{F}_t\right]\\
& = B + \gamma^2 \mathbb{E}\left[ (\max _{b \in \mathcal{C}} Q_t(\state', b))^2 \mid \mathcal{F}_t\right] - \gamma^2 \mathbb{E}\left[ \max _{b \in \mathcal{C}} Q_t(\state', b) \mid \mathcal{F}_t\right]^2 \\
& \leq B + \gamma^2 \mathbb{E}\left[ (\max _{b \in \mathcal{C}} Q_t(\state', b))^2 \mid \mathcal{F}_t\right] \\
& \leq B + \gamma^2 \mathbb{E}\left[ (\max_{\state' \in \sset}\max _{b \in \mathcal{A}} Q_t(\state', b))^2 \mid \mathcal{F}_t\right] \\
& \leq B + \gamma^2 (\max_{\state' \in \sset}\max _{b \in \mathcal{A}} Q_t(\state', b))^2  \\
& = B + \gamma^2 \|Q_t\|_{\infty}^2  \\
& = B + \gamma^2 \|\Delta_t + Q^*\|_{\infty}^2  \\
& \leq B  + \gamma^2  \|Q^*\|_{\infty}^2 + \gamma^2 \|\Delta_t\|_{\infty}^2 \\
& \leq (B  + \gamma^2  \|Q^*\|_{\infty}^2)(1+\|\Delta_t\|_{\infty}^2) + \gamma^2 (1+\|\Delta_t\|_{\infty}^2) \\
& \leq \max\{B  + \gamma^2  \|Q^*\|_{\infty}^2, \gamma^2\}(1+\|\Delta_t\|_{\infty}^2) \\
& \leq \max\{B  + \gamma^2  \|Q^*\|_{\infty}^2, \gamma^2\}(1+\|\Delta_t\|_{\infty})^2. \\
\end{aligned}
$$
Therefore, for constant $C = \max\{B  + \gamma^2  \|Q^*\|_{\infty}^2, \gamma^2\}$,
\begin{equation}
\label{eq:app:condition_2}
 \operatorname{var}\left[F_t(\state, \action) \mid \mathcal{F}_t\right] \leq C (1+\|\Delta_t\|_{\infty})^2.
\end{equation}

Then, by Eq. \eqref{eq:app:condition_1}, Eq. \eqref{eq:app:condition_2}, and Theorem 1 in \citep{jaakkola1993convergence}, $\Delta_t$ converges to zero with probability 1, i.e., $Q_t$ converges to $Q^*$ with probability 1.
\end{proof}

\subsection{Proof of Lemma \ref{contraction}}
\label{proof_lemma_contraction}

\begin{proof}
For the transition probability distribution $\mathcal{P}: \sset \times \aset \times \sset \rightarrow [0,1]$, the set probability distribution $\mathbb{P}$ defined over the combinatorial space of actions, i.e., $\mathbb{P}: 2^\mathcal{A} \rightarrow [0,1]$, the reward function $r: \sset \times \aset \rightarrow \reals$, and the discount factor $\gamma \in [0,1]$, for a function $q: \sset \times \aset \rightarrow \reals$, the operator $\Phi$ is defined as follows:
\begin{align}
(\Phi q)(\state, \action) =\sum_{\mathcal{C}\in 2^{\mathcal{A}}}\mathbb{P}(\mathcal{C})\sum_{\state' \in \sset} \mathcal{P}(\state' \mid \state, \action)\left[r(\state, \action)+\gamma \max _{b \in \mathcal{C}} q(\state', b) \right] .
\end{align}
Therefore, 
$$
\begin{aligned}
\left\|\Phi q_1-\Phi q_2\right\|_{\infty} &=\max _{\state, \action}\left|\sum_{\mathcal{C}\in 2^{\mathcal{A}}}\mathbb{P}(\mathcal{C})\sum_{\state' \in \sset} \mathcal{P}(\state' \mid \state, \action)\left[r(\state, \action) +\gamma \max _{b \in \mathcal{C}} q_1(\state', b)-r(\state, \action)+\gamma \max _{b \in \mathcal{C}} q_2(\state', b)\right]\right| \\
& =\max _{\state, \action} \gamma\left|\sum_{\mathcal{C}\in 2^{\mathcal{A}}}\mathbb{P}(\mathcal{C})\sum_{\state' \in \sset} \mathcal{P}(\state' \mid \state, \action)\left[\max _{b \in \mathcal{C}} q_1(\state', b)-\max _{b \in \mathcal{C}} q_2(\state', b)\right]\right|  \\
& \leq \max _{\state, \action} \gamma \sum_{\mathcal{C}\in 2^{\mathcal{A}}}\mathbb{P}(\mathcal{C})\sum_{\state' \in \sset} \mathcal{P}(\state' \mid \state, \action)\left|\max _{b \in \mathcal{C}} q_1(\state', b)-\max _{b \in \mathcal{C}} q_2(\state', b)\right|  \\
& \leq \max _{\state, \action} \gamma \sum_{\mathcal{C}\in 2^{\mathcal{A}}}\mathbb{P}(\mathcal{C})\sum_{\state' \in \sset} \mathcal{P}(\state' \mid \state, \action) \max _{z, b}\left|q_1(z, b)-q_2(z, b)\right| \\
& \leq\max _{\state, \action} \gamma \sum_{\mathcal{C}\in 2^{\mathcal{A}}}\mathbb{P}(\mathcal{C})\sum_{\state' \in \sset} \mathcal{P}(\state' \mid \state, \action)\left\|q_1-q_2\right\|_{\infty} \\
& =\gamma\left\|q_1-q_2\right\|_{\infty} .
\end{aligned}
$$ 
\end{proof}

\newpage
\section{Stochastic Maximization}
\label{sec_stoch_max_app}


We analyze the proposed stochastic maximization method by comparing its error to that of exact maximization. First, we consider the case without memory, where $\mathcal{C}=\mathcal{R}$, and then the case with memory, where $\mathcal{M} \neq \emptyset$. Finally, we provide a specialized bound for the case where the action values follow a uniform distribution.

\subsection{Memoryless Stochastic Maximization}

In the following lemma, we give a lower bound on the probability of finding an optimal action within a uniformly sampled subset $\mathcal{R}$ of $\lceil\log(n)\rceil$ actions. We prove that for a given state $s$, the probability $p$ of sampling an optimal action within the uniformly randomly sampled subset $\mathcal{R}$ of size $\lceil\log(n)\rceil$ actions is lower bounded with $p \geq \frac{\lceil\log(n)\rceil}{n}$. 

\subsubsection{Proof of Lemma \ref{proba_lemma}}
\label{prooflemma}
\begin{proof}
In the presence of multiple maximizers, we focus on one of them, denoted as $\action^*_0$, and then the probability $p$ of sampling at least one maximizer is lower-bounded by the probability $p_{\action^*_0}$ of finding $\action^*_0$, i.e., 
$$p \geq p_{\action^*_0}.$$

The probability $p_{\action^*_0}$ of finding $\action^*_0$ is the probability of sampling $\action^*_0$ within the random set $\mathcal{R}$ of size $\lceil\log(n)\rceil$, which is the fraction of all possible combinations of size $\lceil\log(n)\rceil$ that include $\action^*_0$. 

This fraction can be calculated as ${n-1 \choose \lceil\log(n)\rceil - 1}$ divided by all possible combinations of size $\lceil\log(n)\rceil$, which is ${n \choose \lceil\log(n)\rceil}$. 

Therefore, $p_{\action^*_0} = \frac{{n-1 \choose \lceil\log(n)\rceil - 1}}{{n \choose \lceil\log(n)\rceil}}$.

Consequently,
\begin{equation}
p \geq \frac{\lceil\log(n)\rceil}{n}.
\end{equation}
\end{proof}



\subsection{Stochastic Maximization with Memory}

While stochastic maximization without memory could approach the maximum value or find it with the probability $p$, lower-bounded in Lemma \ref{proba_lemma}, it never converges to an exact maximization, as it keeps sampling purely at random, as can be seen in Fig. \ref{reward:uniform}. However, stochastic maximization with memory can become an exact maximization when the Q-function becomes stable, which we prove in the following Corollary. Although the $\stochmax$ has sub-linear complexity compared to the max, the following Corollary shows that, on average, for a stable Q-function, after a certain number of iterations, the output of the $\stochmax$ matches the output of max.

\begin{definition}
A Q-function is considered stable for a given state $\state$ if its best action in that state remains unchanged for all subsequent steps, even if the Q-function's values themselves change.
\end{definition}
A straightforward example of a stable Q-function occurs during validation periods when no function updates are performed. However, in general, a stable Q-function does not have to be static and might still vary over the rounds; the key characteristic is that its maximizing action remains the same even when its values are updated. Although the $\stochmax$ has sub-linear complexity compared to the $\max$, without any assumption of the value distributions, the following Corollary shows that, on average, for a stable Q-function, after a certain number of iterations, the output of the $\stochmax$ matches exactly the output of $\max$.

\subsubsection{Proof of Corollary \ref{cor_stable}}
\label{corollary_proof}

\begin{proof}
We formalize the problem as a geometric distribution where the success event is the event of sampling a subset of size $\lceil \log(n)\rceil$ that includes at least one maximizer. The geometric distribution gives the probability that the first time to sample a subset that includes an optimal action requires $k$ independent calls, each with success probability $p$. From Lemma \ref{proba_lemma}, we have $p \geq \frac{\lceil\log(n)\rceil}{n}$. Therefore, on an average, success requires: $\frac{1}{p} \leq \frac{n}{\lceil\log(n)\rceil}$ calls. 

For a given discrete state $\state$, $\mathcal{M}$ keeps track of the most recent best action found. For $\mathcal{C} = \mathcal{R} \cup \mathcal{M}$,
\begin{align}
\stochmax_{\action \in \aset} Q(\state,\action)  
&= \max_{\action \in \mathcal{C}} Q(\state,\action) \geq \max_{\action \in \mathcal{M}} Q(\state,\action).  
\end{align}
Therefore, for a given state $\state$, on average, if the Q-function is stable, then within $\frac{n}{\lceil\log(n)\rceil}$, $\mathcal{M}$ will contain the optimal action $\action^*$. 
Therefore, on an average, after $\frac{n}{\lceil\log(n)\rceil}$ time steps, 
\begin{align}
\stochmax_{\action \in \aset} Q(\state,\action) &\geq \max_{\action \in \mathcal{M}} Q(\state,\action) = \max_{\action \in \aset} Q(\state,\action). \nonumber
\end{align}
We know that, 
$ \stochmax_{\action \in \aset} Q(\state,\action) \leq \max_{\action \in \aset} Q(\state,\action). $
Therefore, for a stable Q-function, on an average, after $\frac{n}{\lceil\log(n)\rceil}$ time steps, $\stochmax_{\action \in \aset} Q(\state,\action)$ becomes $\max_{\action \in \aset} Q(\state,\action)$.
\end{proof}

\subsection{Stochastic Maximization with Uniformly Distributed Rewards}
\label{stoch_max_with_uniform}

While the above corollary outlines an upper-bound on the average number of calls needed to determine the exact optimal action eventually, the following lemma offers insights into the expected maximum value of a randomly sampled subset of actions, comprising $\lceil\log(n)\rceil$ elements when their values are uniformly distributed.

\begin{lemma}
\label{lemma_uniform}
For a given state $\state$ and a uniformly randomly sampled subset $\mathcal{R}$ of size $\lceil\log(n)\rceil$ actions, if the values of the sampled actions follow independently a uniform distribution in the interval $[Q_t(\state,\action_t^{\star})-b_t(\state), Q_t(\state,\action_t^{\star})]$, then the expected value of the maximum Q-function within this random subset is:
\begin{align}
& \mathbb{E}\left[\max_{k\in \mathcal{R}} Q_t(\state,k) \mid \state, \action^{\star}_t \right] = Q_t(\state,\action^{\star}_t) - \frac{b_t(\state)}{\lceil\log(n)\rceil+1}.
\end{align}
\end{lemma}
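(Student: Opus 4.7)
The plan is to reduce the claim to a standard order-statistic computation for i.i.d. uniform random variables, which will make the proof essentially a one-line shift/scale argument followed by a known moment formula.

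First, I would fix $\state$ and write $m = \lceil\log(n)\rceil$, letting $X_1,\dots,X_m$ denote the Q-values $Q_t(\state,k)$ for the actions $k \in \mathcal{R}$. By hypothesis the $X_i$ are i.i.d.\ uniform on the interval $[Q_t(\state,\action^{\star}_t)-b_t(\state),\,Q_t(\state,\action^{\star}_t)]$. The goal is to compute $\mathbb{E}[\max_i X_i \mid \state,\action^{\star}_t]$.

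Next, I would apply an affine change of variables that turns the problem into the standard ``maximum of $m$ uniforms on $[0,1]$'' computation. Concretely, set
\begin{equation}
U_i \;=\; \frac{X_i - (Q_t(\state,\action^{\star}_t)-b_t(\state))}{b_t(\state)},
\end{equation}
so that $U_1,\dots,U_m$ are i.i.d.\ uniform on $[0,1]$ and $\max_i X_i = (Q_t(\state,\action^{\star}_t)-b_t(\state)) + b_t(\state)\cdot\max_i U_i$. It then remains to recall the textbook identity $\mathbb{E}[\max_i U_i] = m/(m+1)$, which I would derive quickly from $\mathbb{P}(\max_i U_i \le u) = u^m$ via $\mathbb{E}[\max_i U_i] = \int_0^1 (1-u^m)\,du = 1 - 1/(m+1) = m/(m+1)$.

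Finally, I would substitute back: linearity of expectation gives
\begin{equation}
\mathbb{E}\!\left[\max_{k\in\mathcal{R}} Q_t(\state,k)\,\Big|\,\state,\action^{\star}_t\right] \;=\; Q_t(\state,\action^{\star}_t) - b_t(\state) + b_t(\state)\cdot\frac{m}{m+1} \;=\; Q_t(\state,\action^{\star}_t) - \frac{b_t(\state)}{m+1},
\end{equation}
and replacing $m$ by $\lceil\log(n)\rceil$ yields the stated formula. There is no real obstacle here; the only subtlety is the bookkeeping of the affine transformation and the implicit assumption that sampling actions uniformly from $\aset$ is compatible with the hypothesis that the resulting Q-values are independent uniforms, which is already absorbed into the lemma's hypothesis. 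The corresponding bound on $\mathbb{E}[\beta_t(\state)\mid\state]$ quoted in the main text then follows by subtracting both sides from $Q_t(\state,\action^{\star}_t)$, using $\max_{\action\in\aset}Q_t(\state,\action)=Q_t(\state,\action^{\star}_t)$, and noting that the exact maximum can only exceed the stochastic one.
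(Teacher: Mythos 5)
Your argument is correct and follows essentially the same route as the paper's proof: an affine rescaling to i.i.d.\ uniforms on $[0,1]$, the CDF identity $\mathbb{P}(\max_i U_i \le u) = u^m$, integration to get $\mathbb{E}[\max_i U_i] = m/(m+1)$, and substitution back to the original scale. The only cosmetic difference is that the paper carries the computation via integration by parts on the CDF of the maximum rather than quoting the textbook moment formula directly.
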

\begin{proof}
    For a given state $\state$ we assume a uniformly randomly sampled subset $\mathcal{R}$ of size $\lceil\log(n)\rceil$ actions, and the values of the sampled actions are independent and follow a uniform distribution in the interval $[Q_t(\state,\action_t^{\star})-b_t(\state), Q_t(\state,\action_t^{\star})]$. Therefore, the cumulative distribution function (CDF) for the value of an action $\action$ given the state $\state$ and the optimal action $\action_t^*$ is:
\[
G(y;\state,\action)
=
\left\{\begin{array}{ll}
0 &  \text{for $y < Q_t(\state,\action_t^*)-b_t(\state)$} \\
y &  \text{for $y \in [Q_t(\state,\action_t^*)-b_t,Q_t(\state,\action^*)]$} \\
1 & \text{for $y > Q_t(\state,\action_t^*)$} \,.
\end{array}\right.
\]

We define the variable $x = (y - (Q_t(\state,\action_t^*)-b_t(\state)))/b_t(\state)$.
\[
F(x;\state,\action)
=
\left\{\begin{array}{ll}
0 &  \text{for $x < 0$} \\
x &  \text{for $x \in [0,1]$} \\
1 & \text{for $x > 1$} \,.
\end{array}\right.
\]

If we select $\lceil\log(n)\rceil$ such actions, the CDF of the maximum of these actions, denoted as $F_{\max}$ is the following:
\def\Fmax{F_{\max}(x;\state,\action)}
\begin{align*}
\Fmax
& = \mathbb{P}\left(\max_{a \in \mathcal{R}} Q_t(\state,\action) \le x\right) \\
& = \prod_{a \in \mathcal{R}} \mathbb{P}\left(Q_t(\state,\action) \le x\right) \\
& = \prod_{a \in \mathcal{R}} F(x;\state,\action) \\
& = F(x;\state,\action)^{\lceil\log(n)\rceil} .\,
\end{align*}
The second line follows from the independence of the values, and the last line follows from the assumption that all actions follow the same uniform distribution.

The CDF of the maximum is therefore given by:
\[
\Fmax
=
\left\{\begin{array}{ll}
0 &  \text{for $x < 0$}  \\
x^{\lceil\log(n)\rceil} &  \text{for $x \in [0,1]$} \\
1 & \text{for $x >1$} \,.
\end{array}\right.
\]

Now, we can determine the desired expected value as
\begin{align*}
\mathbb{E}\left[\max_{\action \in \mathcal{R}} \frac{Q_t(\state,\action) - (Q_t(\state,\action_t^*)-b_t(\state))}{b_t(\state)}\right] & = \int_{-\infty}^\infty x  \, \text{d} \Fmax \\
& = \int_{0}^1 x \, \text{d} \Fmax \\
& = \left[ x \Fmax \right]_0^1 -\int_{0}^1 \Fmax \, \text{d} x \\
& =  1 - \int_{0}^1 x^{\lceil\log(n)\rceil} \, \text{d} x \\
& =  1 -  \frac{1}{\lceil\log(n)\rceil+1}.  \\
\end{align*}
We employed the identity $\int_0^1 x \,\text{d} \mu(x) = \int_0^1 1 - \mu(x) \,\text{d}x$, which can be demonstrated through integration by parts. To return to the original scale, we can first multiply by $b_t$ and then add $Q_t(\state,\action_t^*)-b_t(\state)$, resulting in:
\begin{align*}
\mathbb{E}\left[\max_{\action \in \mathcal{R}} Q_t(\state,\action) \mid \state, \action_t^* \right] & = Q_t(\state,\action_t^*) -  \frac{b_t(\state)}{\lceil\log(n)\rceil+1}. \\
\qedhere
\end{align*}
\end{proof}

As an example of this setting, for $Q_t(\state,\action^{\star}_t) =100$, $b_t=100$, for a setting with $n = 1000$ actions, $\lceil\log(n)\rceil+1=11$. Hence the $\mathbb{E}\left[\max_{k\in \mathcal{R}} Q_t(\state,k) \mid \state, \action^{\star}_t \right] \approx 91$. This shows that even with a randomly sampled set of actions $\mathcal{R}$, the $\stochmax$ can be close to the max. We simulate this setting in the experiments in Fig. \ref{reward:uniform}.

\textcolor{black}{
Our proposed stochastic maximization does not solely rely on the randomly sampled subset of actions $\mathcal{R}$ but also considers actions from previous experiences through $\mathcal{M}$. Therefore, the expected $\stochmax$ should be higher than the above result, providing an upper bound on the expected $\beta_t$ as described in the following corollary of Lemma \ref{lemma_uniform}.
\begin{corollary}
\label{corollary_beta}
For a given discrete state $\state$, if the values of the sampled actions follow independently a uniform distribution from the interval $[Q_t(\state,\action_t^{\star})-b_t(\state), Q_t(\state,\action_t^{\star})]$, then the expected value of $\beta_t(\state)$ is:
\begin{align}
& \mathbb{E}\left[\beta_t(\state) \mid \state \right] \leq \frac{b_t(\state)}{\lceil\log(n)\rceil+1}.
\end{align} 
\end{corollary}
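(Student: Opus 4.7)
The plan is to reduce the claim directly to Lemma \ref{lemma_uniform} by observing that adding the memory set $\mathcal{M}$ can only increase the stochastic maximum, so the worst case (largest $\beta_t$) occurs when we use the memoryless $\stochmax$ that ranges over $\mathcal{R}$ alone. The entire proof is thus a short monotonicity argument followed by a single application of Lemma \ref{lemma_uniform}.

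Concretely, I would proceed in three steps. First, under the assumed uniform distribution on $[Q_t(\state,\action_t^{\star})-b_t(\state),\, Q_t(\state,\action_t^{\star})]$, note that by construction $\max_{\action \in \aset} Q_t(\state,\action) = Q_t(\state,\action_t^{\star})$, so the definition \eqref{beta_t_def} rewrites as $\beta_t(\state) = Q_t(\state,\action_t^{\star}) - \stochmax_{\action \in \aset} Q_t(\state,\action)$. Second, since $\mathcal{C} = \mathcal{R} \cup \mathcal{M} \supseteq \mathcal{R}$, the definition \eqref{stoch_max_def} gives
\begin{equation*}
\stochmax_{\action \in \aset} Q_t(\state,\action) = \max_{k \in \mathcal{C}} Q_t(\state,k) \;\geq\; \max_{k \in \mathcal{R}} Q_t(\state,k),
\end{equation*}
which immediately yields the pointwise bound
\begin{equation*}
\beta_t(\state) \;\leq\; Q_t(\state,\action_t^{\star}) - \max_{k \in \mathcal{R}} Q_t(\state,k).
\end{equation*}

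Third, I take the conditional expectation given $(\state,\action_t^{\star})$ on both sides and plug in Lemma \ref{lemma_uniform}, which evaluates the expected maximum of $\lceil\log(n)\rceil$ i.i.d.\ uniform draws exactly. This gives
\begin{equation*}
\mathbb{E}\!\left[\beta_t(\state) \mid \state, \action_t^{\star}\right] \;\leq\; Q_t(\state,\action_t^{\star}) - \mathbb{E}\!\left[\max_{k \in \mathcal{R}} Q_t(\state,k) \,\middle|\, \state,\action_t^{\star}\right] = \frac{b_t(\state)}{\lceil\log(n)\rceil+1}.
\end{equation*}
Since the bound on the right does not depend on $\action_t^{\star}$, marginalizing over $\action_t^{\star}$ (conditional on $\state$) preserves the inequality and yields the claim.

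There is essentially no hard step: the whole argument is the monotonicity $\mathcal{R} \subseteq \mathcal{C}$ combined with the already-proved uniform-order-statistic computation. The only subtlety worth flagging is the self-referential look of the interval $[Q_t(\state,\action_t^{\star})-b_t(\state),\, Q_t(\state,\action_t^{\star})]$: one should read this as conditioning on the pair $(Q_t(\state,\action_t^{\star}), b_t(\state))$ so that Lemma \ref{lemma_uniform} applies verbatim to the i.i.d.\ uniform draws in $\mathcal{R}$, and then note that the resulting bound is pathwise in $\action_t^{\star}$, which lets us drop that conditioning cleanly.
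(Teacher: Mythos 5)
Your proposal is correct and follows essentially the same route as the paper's proof: both bound $\stochmax_{\action\in\aset}Q_t(\state,\action)=\max_{k\in\mathcal{R}\cup\mathcal{M}}Q_t(\state,k)\geq\max_{k\in\mathcal{R}}Q_t(\state,k)$ by monotonicity, condition on $(\state,\action_t^{\star})$ to apply Lemma \ref{lemma_uniform}, and then remove the conditioning on $\action_t^{\star}$ via the law of total expectation. No gaps.
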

\begin{proof}
At time step $t$, given a state $\state$, and the current estimated Q-function $Q_t$, $\beta_t(\state)$ is defined as follows:
\begin{equation}
\beta_t(\state) = \max _{\action \in \mathcal{A}} Q_t\left(\state, \action\right) - \stochmax _{\action \in \mathcal{A}} Q_t\left(\state, \action\right).
\end{equation}
For a given state $\state$ and a uniformly randomly sampled subset $\mathcal{R}$ of size $\lceil\log(n)\rceil$ actions and a subset of some previous played actions $\mathcal{M} \subset \mathcal{E}$, using the law of total expectation,
\begin{align*}
\mathbb{E}\left[\beta_t(\state)\mid\state\right]  \nonumber &= \mathbb{E}\left[\mathbb{E}\left[\beta_t(\state)\mid \state, \action^{\star}_t \right]\mid\state\right]  \nonumber \\
&= \mathbb{E}\left[\mathbb{E}\left[\max_{k\in \aset} Q_t(\state,k) - \stochmax_{k\in \aset} Q_t(\state,k)\mid \state, \action^{\star}_t \right]\mid\state\right]  \nonumber \\
&=  \mathbb{E}\left[\mathbb{E}\left[\max_{k\in \aset} Q_t(\state,k) - \max_{k\in \mathcal{R} \cup \mathcal{M}} Q_t(\state,k)\mid \state, \action^{\star}_t \right]\mid\state\right] \nonumber \\
&\leq \mathbb{E}\left[\mathbb{E}\left[ \max_{k\in \aset} Q_t(\state,k) - \max_{k\in \mathcal{R}} Q_t(\state,k)\mid \state, \action^{\star}_t \right]\mid\state\right] \nonumber \\
&= \mathbb{E}\left[Q_t(\state,\action_t^*) - \mathbb{E}\left[ \max_{k\in \mathcal{R}} Q_t(\state,k)\mid \state, \action^{\star}_t \right]\mid\state\right]. \nonumber
\end{align*}
Therefore by Lemma \ref{lemma_uniform}:
\begin{align*}
\mathbb{E}\left[\beta_t(\state)\mid\state\right]  \nonumber &\leq \mathbb{E}\left[Q_t(\state,\action_t^*) - (Q_t(\state,\action_t^*) -  \frac{b_t(\state)}{\lceil\log(n)\rceil+1})\mid\state\right] \nonumber \\
&= \mathbb{E}\left[\frac{b_t(\state)}{\lceil\log(n)\rceil+1}\mid\state\right] \nonumber \\
&= \frac{b_t(\state)}{\lceil\log(n)\rceil+1}. \nonumber
\end{align*} 
\end{proof}}

\newpage
\section{Pseudocodes}
\label{appendix_pseudo_codes}

\begin{algorithm}
\begin{algorithmic}
\caption{Stochastic Double Q-learning}
\label{alg:stoc_double_q_learning}
\STATE Initialize $Q^A(\state, \action)$ and $Q^B(\state, \action)$ for all $\state \in \mathcal{S}, \action \in \mathcal{A}$, $n=|\aset|$
\FOR{each episode}
    \STATE Observe state $\state$.
    \FOR{each step of episode}
        \STATE Choose $\action$ from $\state$ via $Q^A+Q^B$ with policy $\pi^S_{(Q^A+Q^B)}(\state)$ in Eq. \eqref{eq:stochargmaxPolicy}.
        \STATE Take action $\action$, observe $r$, $\state'$.
        \STATE Choose either UPDATE(A) or UPDATE(B), for example randomly.
        \IF{UPDATE(A)}
            \STATE $\Delta^A \leftarrow r + \gamma  Q^B(\state', \stochargmax_{b\in \mathcal{A}} Q^A(\state',b)) - Q^A(\state, \action)$.
            \STATE $Q^A(\state, \action) \leftarrow Q^A(\state, \action) + \alpha(\state,\action) \Delta^A$.\\
        \ELSIF{UPDATE(B)}
            \STATE $\Delta^B \leftarrow r + \gamma  Q^A(\state', \stochargmax_{b\in \mathcal{A}} Q^B(\state',b)) - Q^B(\state, \action)$.
            \STATE $Q^B(\state, \action) \leftarrow Q^B(\state, \action) + \alpha(\state,\action) \Delta^B$.\\
        \ENDIF
        \STATE $\state \leftarrow \state'$.
    \ENDFOR
\ENDFOR
\end{algorithmic}
\end{algorithm}

\begin{algorithm}
\begin{algorithmic}
\caption{Stochastic Sarsa}
\label{alg:sarsa}
\STATE Initialize $Q(\state, \action)$ for all $\state \in \mathcal{S}, \action \in \mathcal{A}$, $n=|\aset|$
\FOR{each episode}
    \STATE Observe state $\state$.
    \STATE Choose $\action$ from $\state$ with policy $\pi_Q^S(\state)$ in Eq. \eqref{eq:stochargmaxPolicy}.
    \FOR{each step of episode}
        \STATE Take action $\action$, observe $r$, $\state'$.
        \STATE Choose $\action'$ from $\state'$ with policy $\pi_Q^S(\state')$ in Eq. \eqref{eq:stochargmaxPolicy}.
        \STATE $Q(\state, \action) \leftarrow Q(\state, \action) + \alpha(\state,\action) [r + \gamma  Q(\state', \action') - Q(\state, \action)]$.\\
        \STATE $\state \leftarrow \state'$; $\action \leftarrow \action'$.
    \ENDFOR
\ENDFOR
\end{algorithmic}
\end{algorithm}

\begin{algorithm}
\begin{algorithmic}
\caption{Stochastic Deep Q-Network (StochDQN)}
\label{alg:StochDQN}
\STATE \textbf{Algorithm parameters:} learning rate $\alpha \in (0, 1]$, replay buffer $\mathcal{E}$, update rate $\tau$. \\
\STATE \textbf{Initialize:} neural network $Q(\state, \action; \theta)$ with random weights $\theta$, target network $\hat{Q}(\state, \action; \theta^-)$ with $\theta^- = \theta$, set of actions $\aset$ of size $n$.\\
\FOR{each episode}
    \STATE Initialize state $\state$.
    \WHILE{not terminal state is reached}
        \STATE Choose $\action$ from $\state$ using a stochastic policy as defined in Eq. \eqref{ap:eq:stochargmaxPolicy} using $Q(\state, .; \theta)$.
        \STATE Take action $\textbf{a}$, observe reward $r(\state,\action)$ and next state $\state'$.
        \STATE Store $(\state, \action, r(\state,\action), \state')$ in replay buffer $\mathcal{E}$.
        \STATE Compute target values for the mini-batch:
        \[
        y_i = \begin{cases}
            r_i & \text{if } \state'_i \text{ is terminal} \\
            r_i + \gamma \hat{Q}(\state'_i, \stochargmax_{\action' \in \mathcal{A}} \hat{Q}(\state'_i, \action'; \theta^-); \theta^-) & \text{otherwise}.
        \end{cases}
        \]
        \STATE Perform a gradient descent step on the loss:
        \[
        \mathcal{L}(\theta) = \frac{1}{\lceil\log(n)\rceil} \sum_{i=1}^{\lceil\log(n)\rceil}(y_i - Q(\state_i, \action_i; \theta))^2.
        \]
        \STATE Update the target network weights:
        \[
        \theta^- \leftarrow \tau \cdot \theta + (1 - \tau) \cdot \theta^-.
        \]
        \STATE Update the Q-network weights using gradient descent:
        \[
        \theta \leftarrow \theta + \alpha \nabla_\theta \mathcal{L}(\theta).
        \]
        \STATE $\state \leftarrow \state'$.
\ENDWHILE
\ENDFOR
\end{algorithmic}
\end{algorithm}

\section{Experimental Details}
\label{appendix_exp_details}

\subsection{Environments}

We test our proposed algorithms on a standardized set of environments using open-source libraries. We compare stochastic maximization to exact maximization and evaluate the proposed stochastic RL algorithms on Gymnasium environments \citep{brockman2016openai}. Stochastic Q-learning and Stochastic Double Q-learning are tested on the CliffWalking-v0, the FrozenLake-v1, and a generated MDP environment, while stochastic deep Q-learning approaches are tested on MuJoCo control tasks \citep{todorov2012mujoco}. 

\subsubsection{Environments with Discrete States and Actions}

We generate an MDP environment with 256 actions, with rewards following a normal distribution of mean -50 and standard deviation of 50, with 3 states. Furthermore, while our approach is designed for large discrete action spaces, we tested it in Gymnasium environments \citep{brockman2016openai} with only four discrete actions, such as CliffWalking-v0 and FrozenLake-v1. CliffWalking-v0 involves navigating a grid world from the starting point to the destination without falling off a cliff. FrozenLake-v1 requires moving from the starting point to the goal without stepping into any holes on the frozen surface, which can be challenging due to the slippery nature of the ice.

\subsubsection{Environments with Continuous States: Discretizing Control Tasks}

We test the stochastic deep Q-learning approaches on MuJoCo \citep{todorov2012mujoco} for continuous states discretized control tasks. We discretize each action dimension into $i$ equally spaced values, creating a discrete action space with $n = i^d$ $d$-dimensional actions. We mainly focused on the inverted pendulum and the half-cheetah. The inverted pendulum involves a cart that can be moved left or right, intending to balance a pole on top using a 1D force, with $i=512$ resulting in 512 actions. The half-cheetah is a robot with nine body parts aiming to maximize forward speed. It can apply torque to 6 joints, resulting in 6D actions with $i=4$, which results in 4096 actions.

\subsection{Algorithms}

\subsubsection{Stochastic Maximization}

We have two scenarios, one for discrete and the other for continuous states. For discrete states, $\mathcal{E}$ is a dictionary with the keys as the states in $\sset$ with corresponding values of the latest played action in every state. In contrast, $\mathcal{E}$ comprises the actions in the replay buffer for continuous states. Indeed, we do not consider the whole set $\mathcal{E}$ either. Instead, we only consider a subset $\mathcal{M} \subset \mathcal{E}$. For discrete states, for a given state $\state$, $\mathcal{M}$ includes the latest two exploited actions in state $\state$.
For continuous states, where it is impossible to retain the last exploited action for each state, we consider randomly sampled subset $\mathcal{M} \subset \mathcal{E}$, which includes $\lceil\log(n)\rceil$ actions, even though they were played in different states. In the experiments involving continuous states, we demonstrate that this was sufficient to achieve good results, see Section \ref{Stochastic_DQN_Reward_Analysis}. 

\subsubsection{Tabular Q-learning Methods}

We set the training parameters the same for all the Q-learning variants. We follow similar hyper-parameters as in \citep{hasselt2010double}. We set the discount factor $\gamma$ to 0.95 and apply a dynamical polynomial learning rate $\alpha$ with $\alpha_t(\state, \action) = 1/z_t(\state, \action)^{0.8}$, where $z_t(\state, \action)$ is the number of times the pair $(\state, \action)$ has been visited, initially set to one for all the pairs. For the exploration rate, we use use a decaying $\varepsilon$, defined as $\varepsilon(\state) = 1/
\sqrt(z(\state))$ where $z(\state)$ is the number of times state $\state$ has been visited, initially set to one for all the states. For
Double Q-learning $z_t(\state, \action) = z^A_t(\state, \action)$ if $Q^A$ is updated and $z_t(\state, \action) = z^B_t(\state, \action)$ if $Q^B$ is updated, where $z^A_t$ and $z^B_t$ store the number of updates for each action for the corresponding value function. We averaged the results over ten repetitions. For Stochastic Q-learning, we track a dictionary $\mathcal{D}$ with keys being the states, and values being the latest exploited action. Thus, for a state $\state$, the memory $\mathcal{M} = \mathcal{D}(\state)$, thus $\mathcal{M}$ is the latest exploited action in the same state $\state$.

\subsubsection{Deep Q-network Methods}

We set the training parameters the same for all the deep Q-learning variants. We set the discount factor $\gamma$ to 0.99 and the learning rate $\alpha$ to 0.001. Our neural network takes input of a size equal to the sum of the dimensions of states and actions with a single output neuron. The network consists of two hidden linear layers, each with a size of 64, followed by a ReLU activation function \citep{nair2010rectified}. We keep the exploration rate $\varepsilon$ the same for all states, initialize it at 1, and apply a decay factor of 0.995, with a minimum threshold of 0.01. For $n$ total number of actions, during training, to train the network, we use stochastic batches of size $\lceil\log(n)\rceil$ uniformly sampled from a buffer of size $2\lceil\log(n)\rceil$. We averaged the results over five repetitions. For the stochastic methods, we consider the actions in the batch of actions as the memory set $\mathcal{M}$. We choose the batch size in this way to keep the complexity of the Stochastic Q-learning within $\mathcal{O}(\log(n))$.

\subsection{Compute and Implementation} 

We implement the different Q-learning methods using Python 3.9, Numpy 1.23.4, and Pytorch 2.0.1. For proximal policy optimization (PPO) \citep{schulman2017proximal}, asynchronous actor-critic (A2C) \cite{mnih2016asynchronous}, and deep deterministic policy gradient (DDPG) \cite{lillicrap2015continuous}, we use the implementations of Stable-Baselines \cite{stable-baselines}. We test the training time using a CPU 11th Gen Intel(R) Core(TM) i7-1165G7 @ 2.80GHz 1.69 GHz. with 16.0 GB RAM.

\newpage
\section{Additional Results}

\subsection{Wall Time Speed}
\label{appendix_experiment_time}

Stochastic maximization methods exhibit logarithmic complexity regarding the number of actions, as confirmed in Fig. \ref{fig_app:time}. Therefore, both StochDQN and StochDDQN, which apply these techniques for action selection and updates, have exponentially faster execution times compared to both DQN and DDQN, which can be seen in Fig \ref{fig_app:beta_pendulum} which shows the complete step duration for deep Q-learning methods, which include action selection and network update. The proposed methods are nearly as fast as a random algorithm, which samples and selects actions randomly and has no updates. 

\begin{figure}[t]
    \centering
    \begin{subfigure}[b]{0.37\textwidth}
        \includegraphics[width=\textwidth]{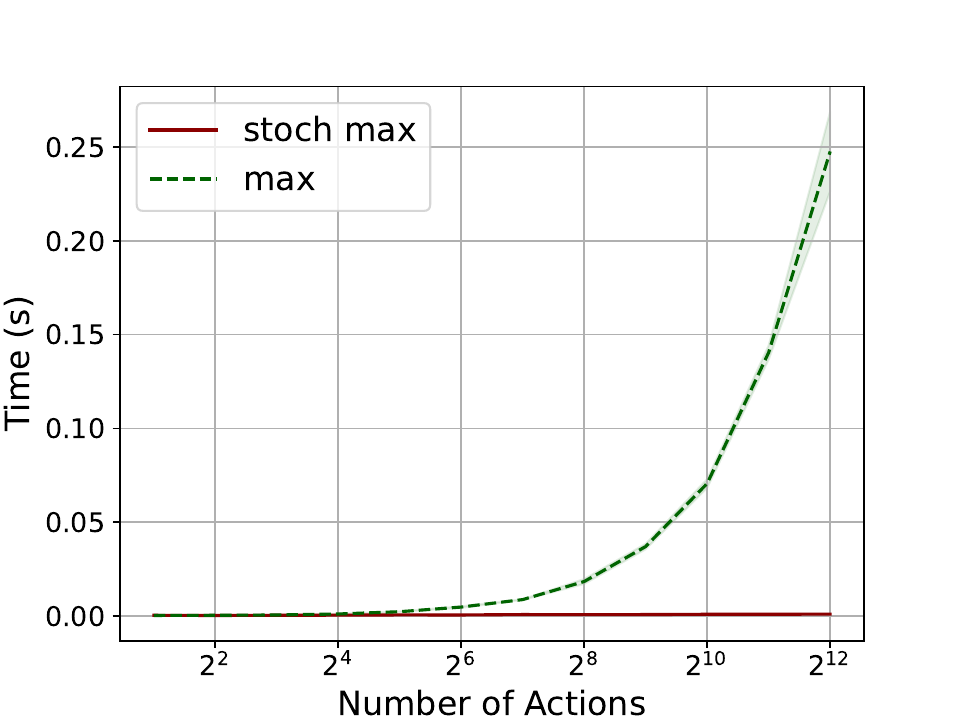}
        \caption{Action Selection Time}
        \label{fig_app:time}
    \end{subfigure}
    \hspace{0.1\textwidth}
    \begin{subfigure}[b]{0.37\textwidth}
        \includegraphics[width=\textwidth]{figures/Networkpendulum_time.pdf}
        \caption{Full Step Duration}
        \label{fig_app:beta_pendulum}
    \end{subfigure}
    \caption{Comparison results for the stochastic and deterministic methods. The x-axis represents the number of possible actions, and the y-axis represents the time step duration of the agent in seconds.}
    \label{fig_app:stoch_max_analysis_time}
\end{figure}

\subsection{Stochastic Maxmization}

\subsubsection{Stochastic Maxmization vs Maximization with Uniform Rewards} 
\label{uniform_appendx_exp}

In the setting described in Section \ref{stoch_max_with_uniform} with 5000 uniformly independently distributed action values in the range of [0, 100], as shown in Fig. \ref{reward:uniform}, $\stochmax$ without memory, i.e., $\mathcal{M}=\emptyset$ reaches around 91 in average return, and keeps fluctuating around, while $\stochmax$ with $\mathcal{M}$ quickly achieves the optimal reward.

\begin{figure}[t]
\begin{center}
\includegraphics[width=8cm]{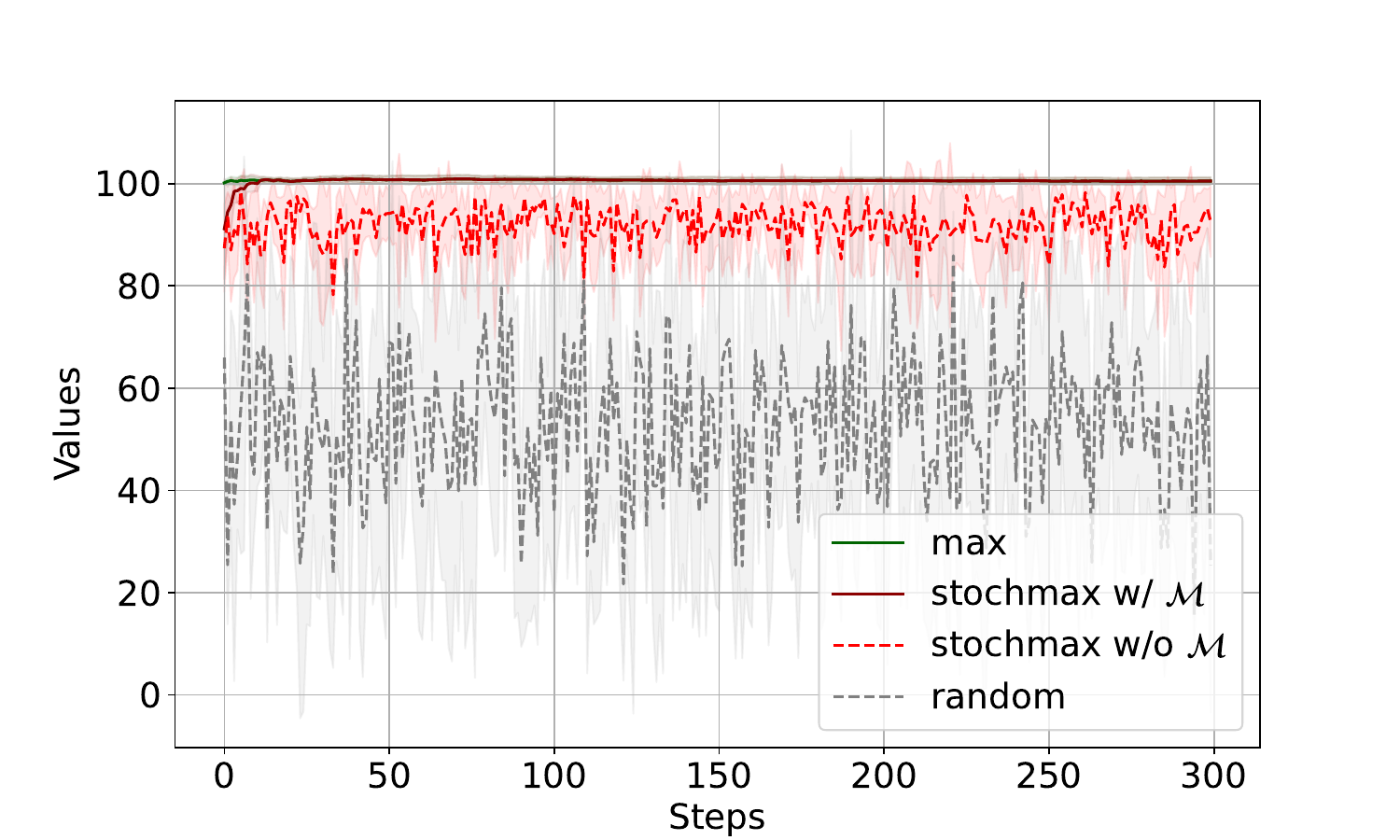}
\end{center}
\caption{$\stochmax$ with (w/) and without (w/o) memory $\mathcal{M}$ vs. $\max$ on uniformly distributed action values as described in Section \ref{sec_stoch_max_app}. The x-axis and y-axis represent the steps and the values, respectively.}
\label{reward:uniform}
\end{figure}

\subsubsection{Stochastic Maximization Analysis}
\label{appendix_experiement_stochmax}

In this section, we analyze stochastic maximization by tracking returned values across rounds, $\omega_t$ (Eq. \eqref{omega_t_def}), and $\beta_t$ (Eq. \eqref{beta_t_def}), which we provide here. At time step $t$, given a state $\state$, and the current estimated Q-function $Q_t$, we define the non-negative underestimation error as $\beta_t(\state)$, as follows:
\begin{equation}
\label{ap:beta_t_def}
\beta_t(\state) = \max _{\action \in \mathcal{A}} Q_t\left(\state, \action\right) - \stochmax _{\action \in \mathcal{A}} Q_t\left(\state, \action\right).
\end{equation}
Furthermore, we define the ratio $\omega_t(\state)$, as follows:
\begin{equation}
\label{ap:omega_t_def}
\omega_t(\state) = \frac{\stochmax _{\action \in \mathcal{A}} Q_t\left(\state, \action\right)}{ \max _{\action \in \mathcal{A}} Q_t\left(\state, \action\right)} .
\end{equation}
It follows that:
\begin{equation}
\label{ap:omega_t_def_inter}
\omega_t(\state) = 1 - \frac{\beta_t(\state)}{\max _{\action \in \mathcal{A}} Q_t\left(\state, \action\right)} .
\end{equation}

For Deep Q-Networks, for the InvertedPendulum-v4, both $\stochmax$ and max return similar values (Fig. \ref{fig_app:stochmax}), $\omega_t$ approaches one rapidly (Fig. \ref{fig_app_:omega}) and $\beta_t$ remains below 0.5 (Fig. \ref{fig_app:beta}). In the case of HalfCheetah-v4, both $\stochmax$ and max return similar values (Fig. \ref{ap_fig_cheetah:stochmax}), $\omega_t$ quickly converges to one (Fig. \ref{ap_fig_cheetah:oemga}), and $\beta_t$ is upper bounded below eight (Fig. \ref{fig_app_cheetah:beta}).

While the difference $\beta_t$ remains bounded, the values of both $\stochmax$ and max increase over the rounds as the agent explores better options. This leads to the ratio $\omega_t$ converging to one as the error becomes negligible over the rounds, as expected according to Eq. \eqref{ap:omega_t_def_inter}.

\begin{figure}[t]
    \centering
    \begin{subfigure}[b]{0.32\textwidth}
        \includegraphics[width=\textwidth]{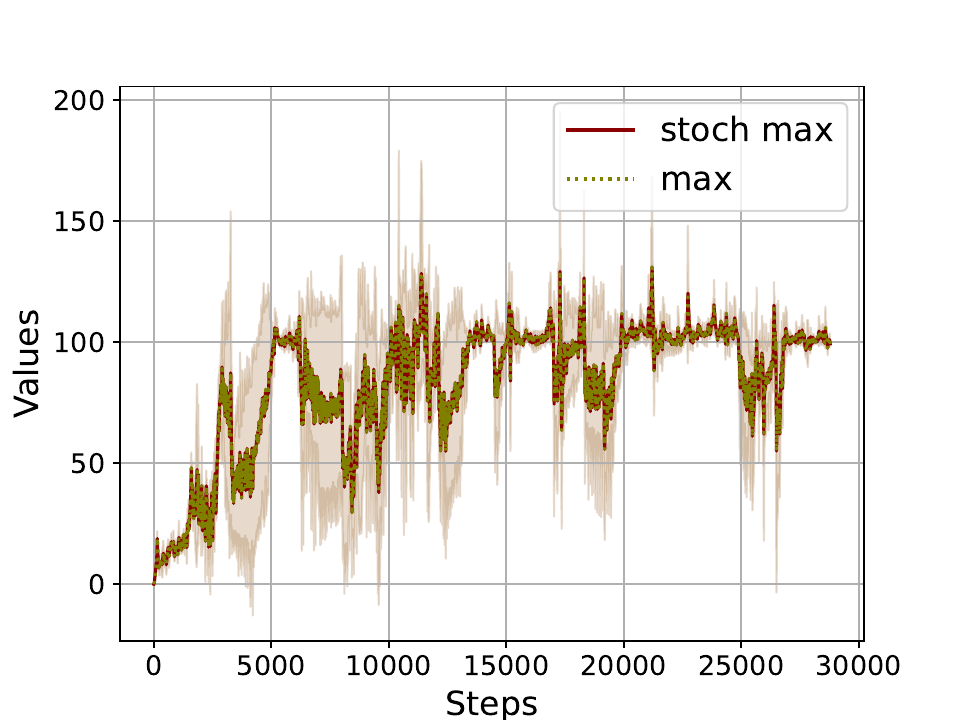}
        \caption{$\stochmax$ vs max}
        \label{fig_app:stochmax}
    \end{subfigure}
        \hfill
    \begin{subfigure}[b]{0.32\textwidth}
        \includegraphics[width=\textwidth]{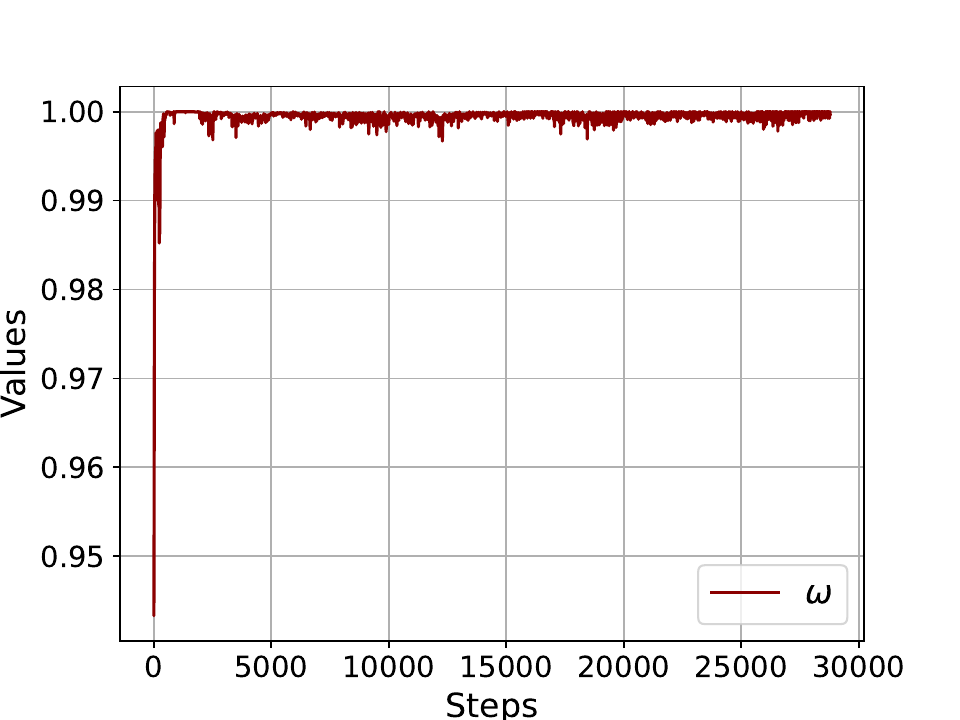}
        \caption{Ratio $\omega_t$}
        \label{fig_app_:omega}
    \end{subfigure}
    \hfill
    \begin{subfigure}[b]{0.32\textwidth}
        \includegraphics[width=\textwidth]{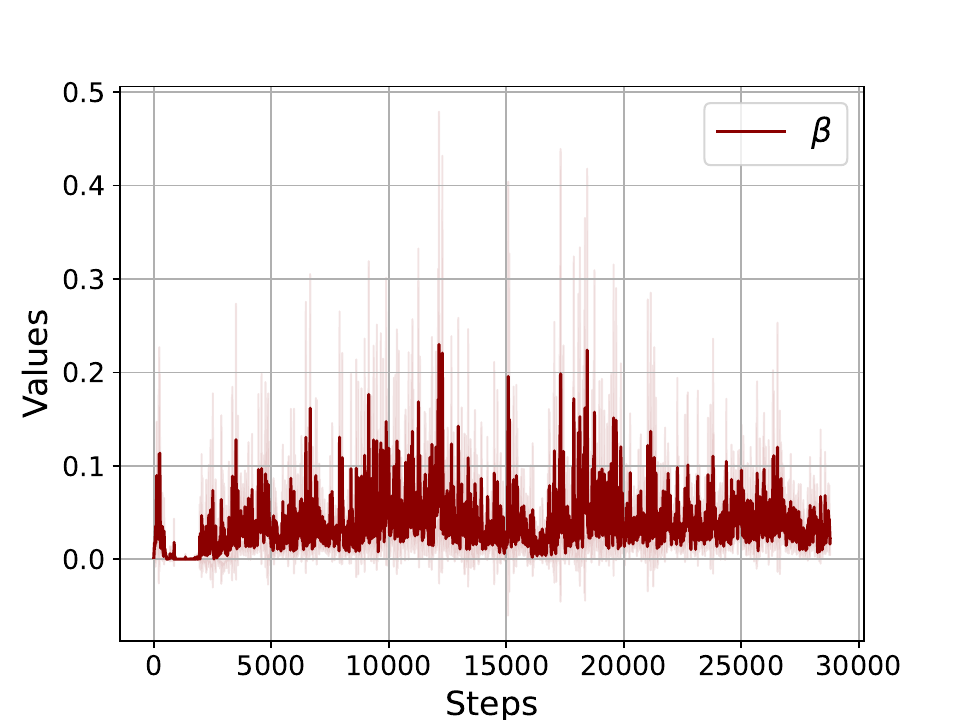}
        \caption{Difference $\beta_t$}
        \label{fig_app:beta}
    \end{subfigure}
    \caption{Comparison results for the stochastic and non-stochastic methods for the Inverted Pendulum with 512 actions. }
\end{figure}

\begin{figure}[t]
    \centering
    \begin{subfigure}[b]{0.32\textwidth}
        \includegraphics[width=\textwidth]{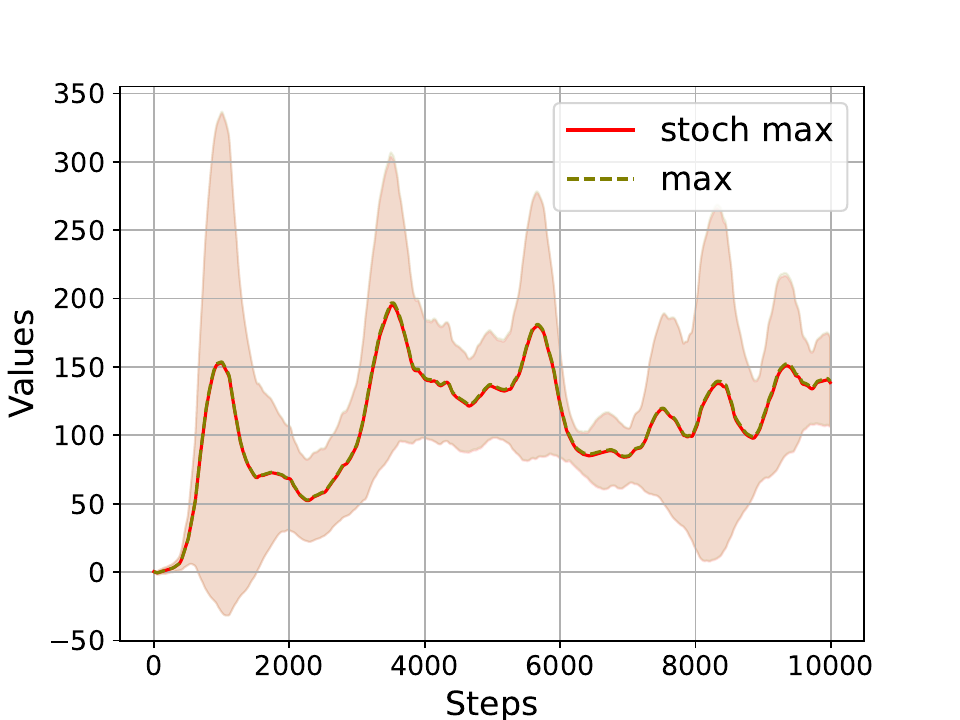}
        \caption{$\stochmax$ vs max}
        \label{ap_fig_cheetah:stochmax}
    \end{subfigure}
        \hfill
    \begin{subfigure}[b]{0.32\textwidth}
        \includegraphics[width=\textwidth]{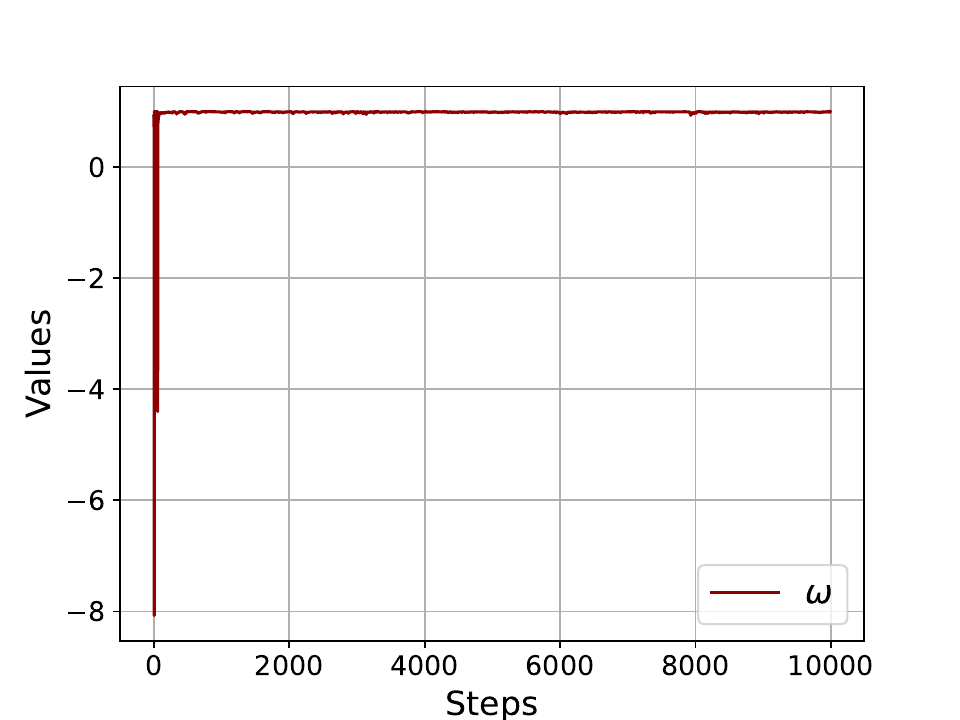}
        \caption{Ratio $\omega_t$}
        \label{ap_fig_cheetah:oemga}
    \end{subfigure}
    \hfill
    \begin{subfigure}[b]{0.32\textwidth}
        \includegraphics[width=\textwidth]{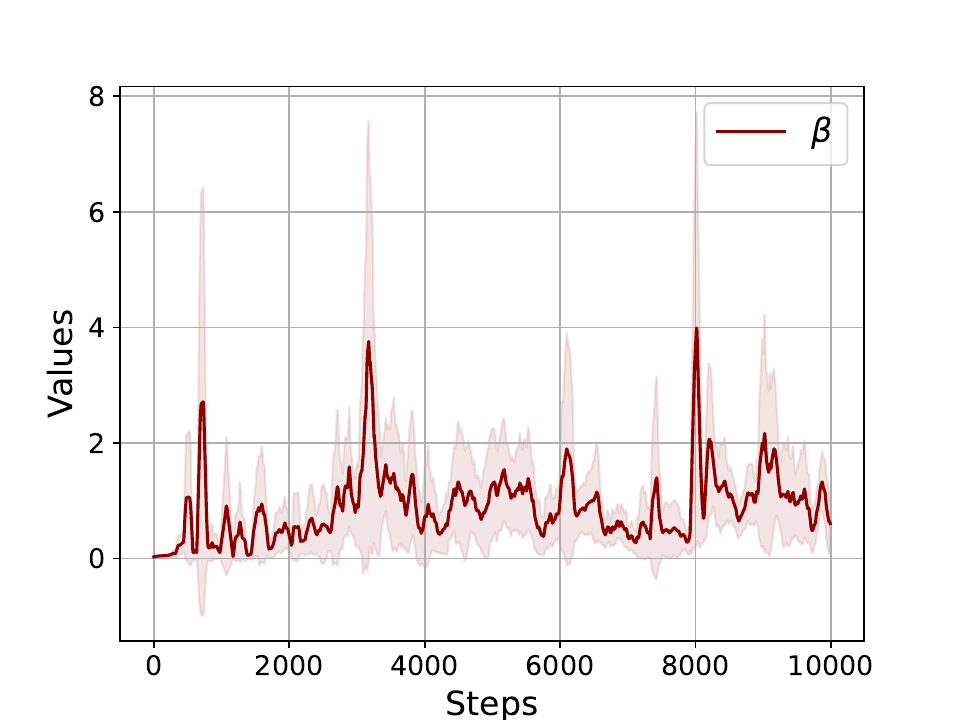}
        \caption{Difference $\beta_t$}
        \label{fig_app_cheetah:beta}
    \end{subfigure}
    \caption{Comparison results for the stochastic and non-stochastic methods for the Half Cheetah with 4096 actions. }
\end{figure}

\subsection{Stochastic Q-network Reward Analysis}
\label{appendix_experiement_dqn_rewards}

\begin{figure}[t]
    \centering
    \begin{subfigure}[b]{0.47\textwidth}
        \includegraphics[width=\textwidth]{figures/pendulum_reward_steps.pdf}
        \caption{Inverted Pendulum}
        \label{reward:ap:pendulum}
    \end{subfigure}
    \hspace{0.05\textwidth}
    \begin{subfigure}[b]{0.47\textwidth}
        \includegraphics[width=\textwidth]{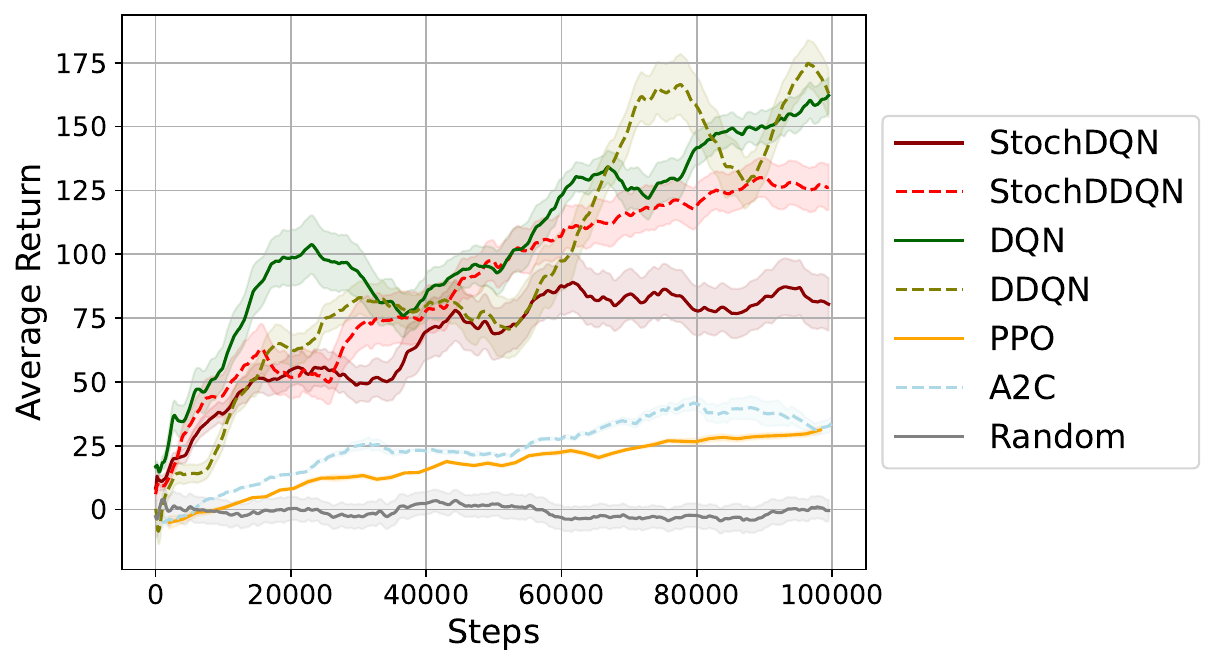}
        \caption{Half Cheetah}
        \label{reward:ap:cheetah}
    \end{subfigure}
    \caption{Stochastic vs non-stochastic of deep Q-learning variants on Inverted Pendulum and Half Cheetah, with steps on the x-axis and average returns, smoothed over a size 100 window on the y-axis.}
\end{figure}

As illustrated in Fig. \ref{reward:ap:pendulum} and Fig. \ref{reward:ap:cheetah} for the inverted pendulum and half cheetah experiments, which involve 512 and 4096 actions, respectively, both StochDQN and StochDDQN attain the optimal average return in a comparable number of rounds to DQN and DDQN. Additionally, StochDQN exhibits the quickest attainment of optimal rewards for the inverted pendulum. Furthermore, while DDQN did not perform well on the inverted pendulum task, its modification, i.e., StochDDQN, reached the optimal rewards.

\subsection{Stochastic Q-learning Reward Analysis}
\label{appendix_experiement_rewards}

We tested Stochastic Q-learning, Stochastic Double Q-learning, and Stochastic Sarsa in environments with both discrete states and actions. Interestingly, as shown in Fig. \ref{ap:fig:frozen_lake}, our stochastic algorithms outperform their deterministic counterparts in terms of cumulative rewards. Furthermore, we notice that Stochastic Q-learning outperforms all the considered methods regarding the cumulative rewards. Moreover, in the CliffWalking-v0 (as shown in Fig. \ref{ap:fig:clif_walking}), as well as for the generated MDP environment with 256 possible actions (as shown in Fig. \ref{ap:fig:mdp}), all the stochastic and non-stochastic algorithms reach the optimal policy in a similar number of steps. 


\begin{figure}[t]
    \centering
    \begin{subfigure}[b]{0.55\textwidth}
        \includegraphics[width=\textwidth]{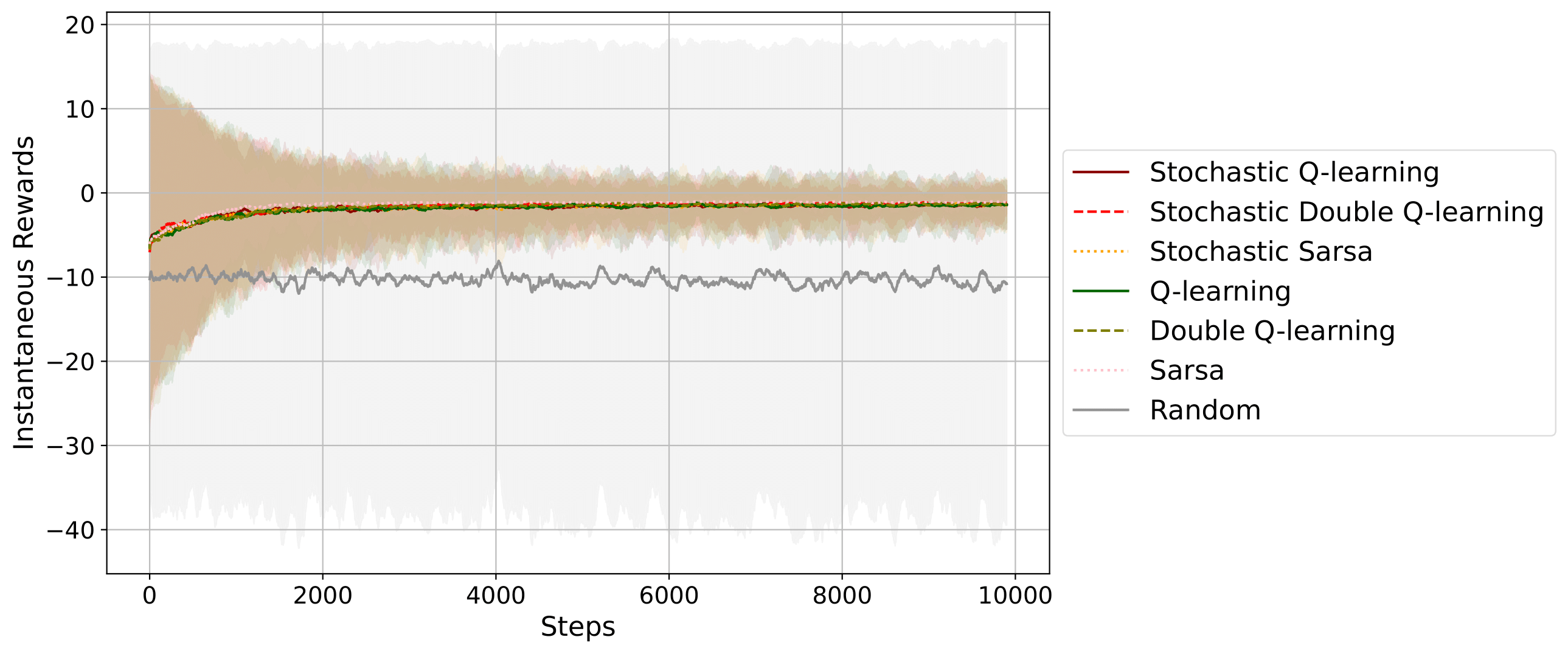}
        \caption{Instantaneous Rewards}
    \end{subfigure}
        \hspace{0.05\textwidth}
    \begin{subfigure}[b]{0.39\textwidth}
        \includegraphics[width=\textwidth]{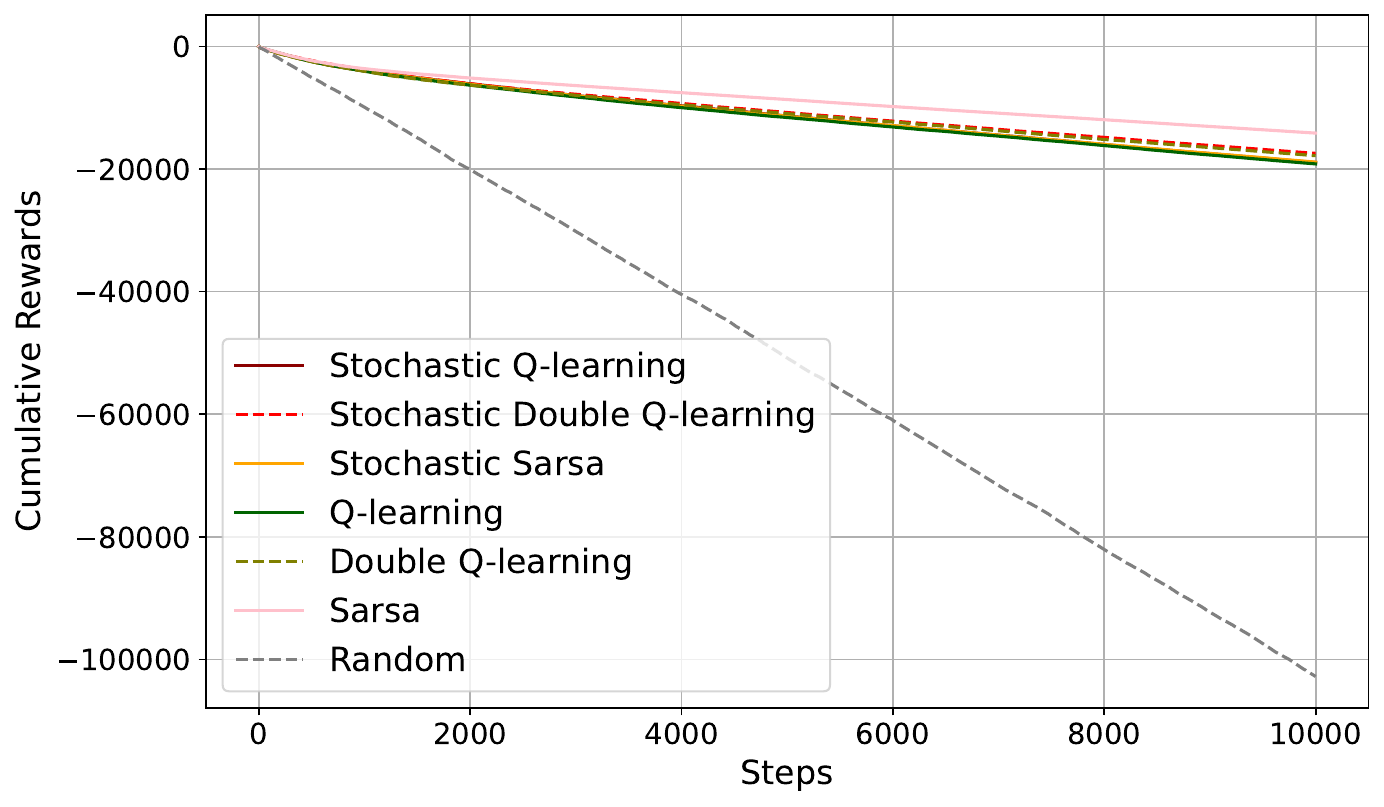}
        \caption{Cumulative Rewards}
    \end{subfigure}
    \caption{Comparing stochastic and non-stochastic Q-learning approaches on the Cliff Walking, with steps on the x-axis, instantaneous rewards smoothed over a size 1000 window on the y-axis for plot (a), and cumulative rewards on the y-axis for plot (b).}
    \label{ap:fig:clif_walking}
\end{figure}

\begin{figure}[t]
    \centering
    \begin{subfigure}[b]{0.55\textwidth}
        \includegraphics[width=\textwidth]{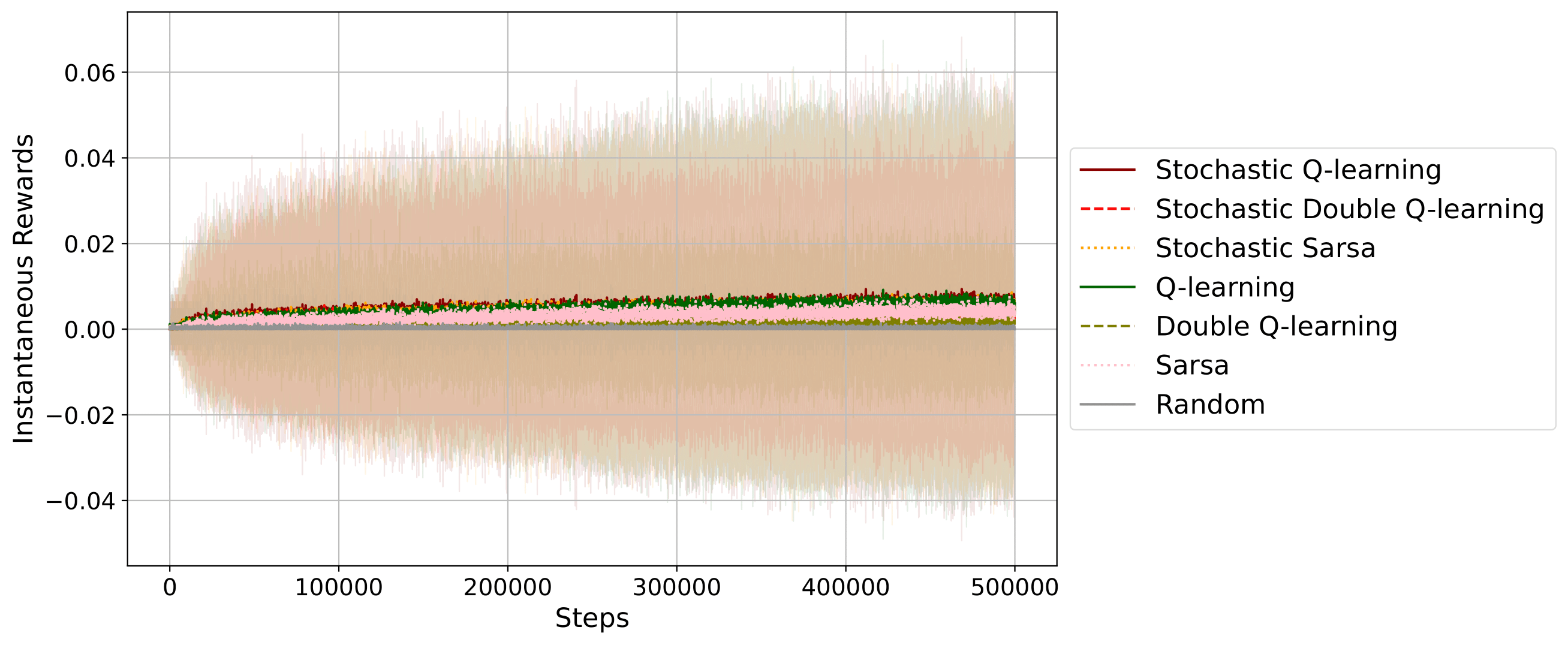}
        \caption{Instantaneous Rewards}
    \end{subfigure}
        \hspace{0.05\textwidth}
    \begin{subfigure}[b]{0.39\textwidth}
        \includegraphics[width=\textwidth]{figures/cumulative_rewards_frozen.pdf}
        \caption{Cumulative Rewards}
    \end{subfigure}
    \caption{Comparing stochastic and non-stochastic Q-learning approaches on the Frozen Lake, with steps on the x-axis, instantaneous rewards smoothed over a size 1000 window on the y-axis for plot (a), and cumulative rewards on the y-axis for plot (b).}
    \label{ap:fig:frozen_lake}
\end{figure}

\begin{figure}[t]
    \centering
    \begin{subfigure}[b]{0.55\textwidth}
        \includegraphics[width=\textwidth]{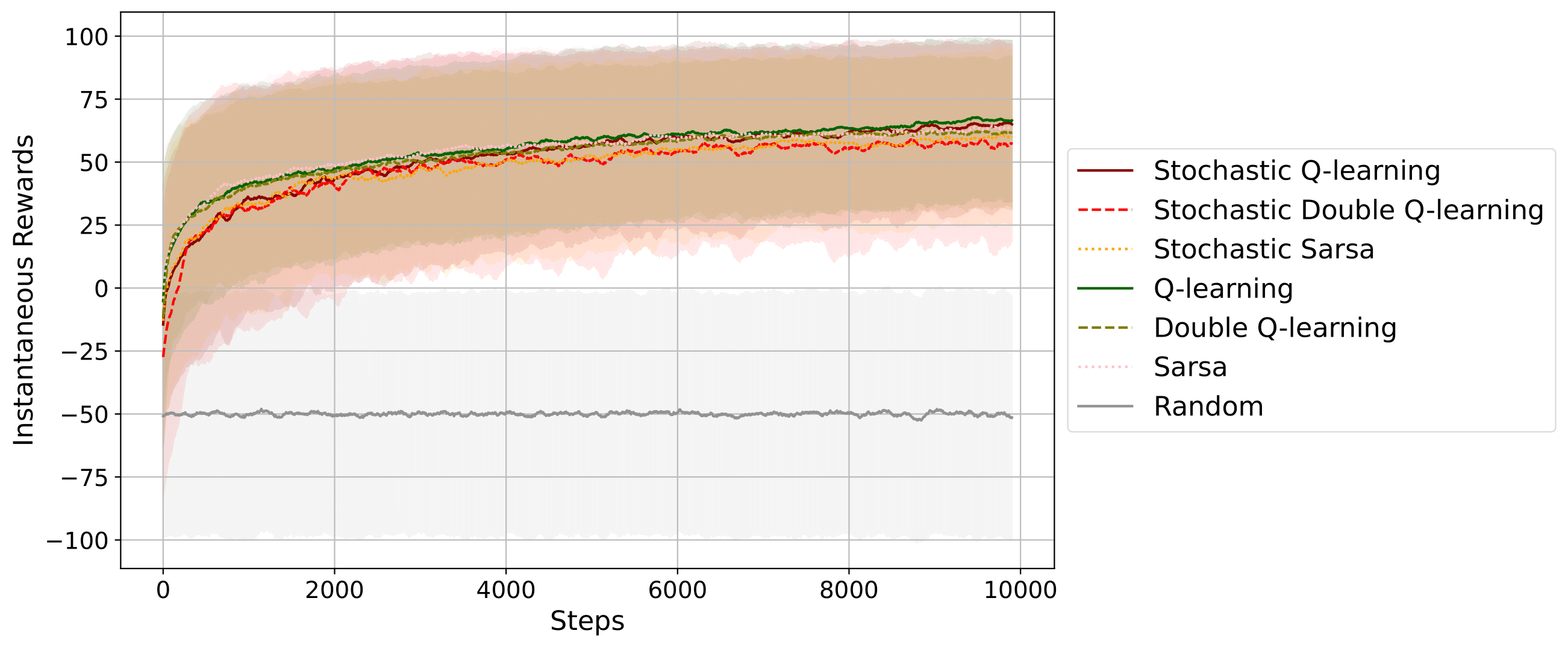}
        \caption{Instantaneous Rewards}
    \end{subfigure}
        \hspace{0.05\textwidth}
    \begin{subfigure}[b]{0.39\textwidth}
        \includegraphics[width=\textwidth]{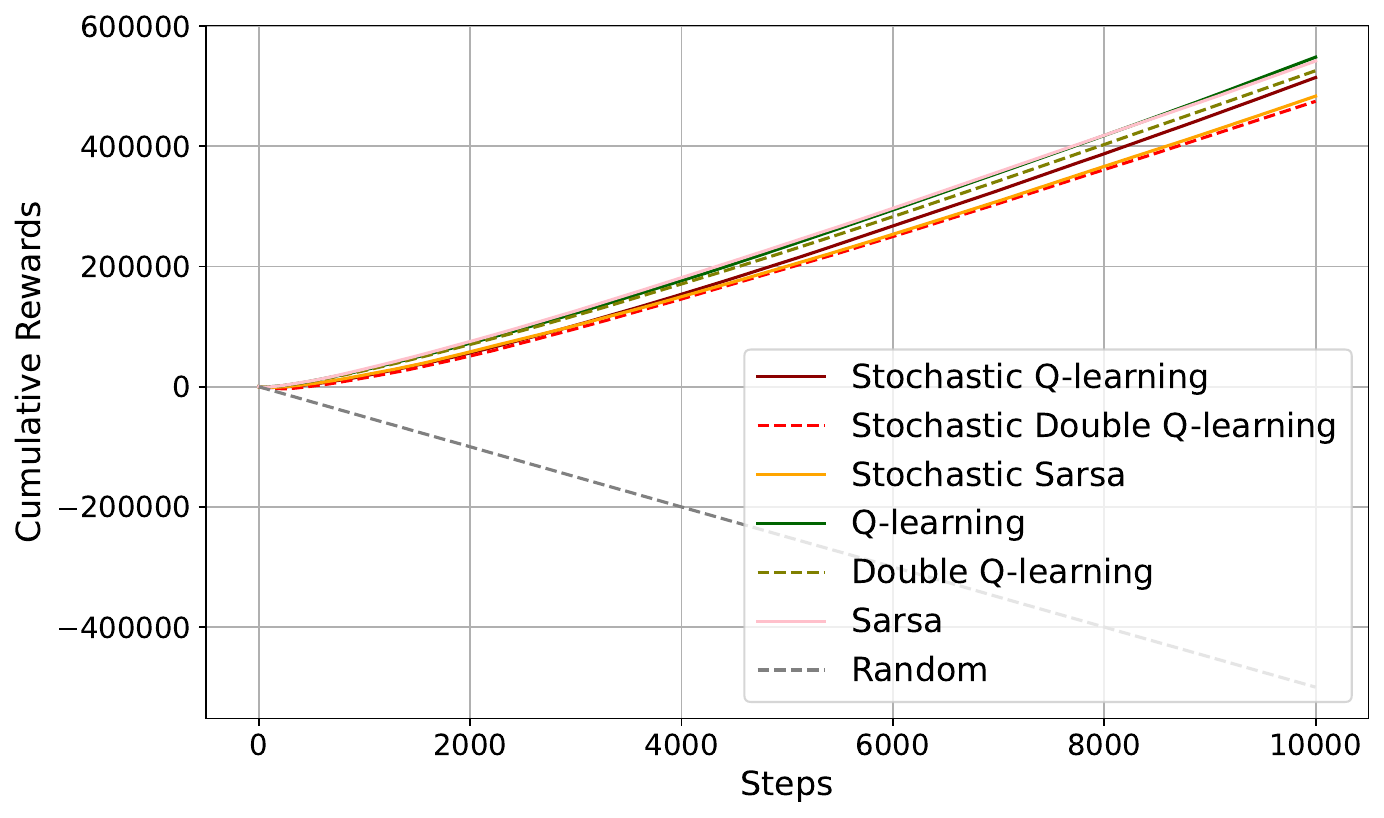}
        \caption{Cumulative Rewards}
    \end{subfigure}
    \caption{Comparing stochastic and non-stochastic Q-learning approaches on the generated MDP environment, with steps on the x-axis, instantaneous rewards smoothed over a size 1000 window on the y-axis for plot (a), and cumulative rewards on the y-axis for plot (b).}
    \label{ap:fig:mdp}
\end{figure}

\newpage

\end{document}